\newcommand{\mysection}[1]{\section{\MakeUppercase{#1}}}
\begin{document}

%

%





\runningtitle{On the Nystr\"om Approximation for Preconditioning in Kernel Machines}

\twocolumn[

\aistatstitle{On the Nystr\"om Approximation for 
Preconditioning\\ in Kernel Machines}
\aistatsauthor{ Amirhesam Abedsoltan \And Parthe Pandit \And Luis Rademacher \And Mikhail Belkin}
\aistatsaddress{CSE, UCSD  \And C-MInDS, IIT Bombay \And Mathematics, UC Davis \And  HDSI, UCSD } ]


\begin{abstract}

Kernel methods are a popular class of nonlinear predictive models in machine learning. Scalable algorithms for learning kernel models need to be iterative in nature, but convergence can be slow due to poor conditioning. 
Spectral preconditioning is an important tool to speed-up the convergence of such iterative algorithms for training kernel models. However computing and storing a spectral preconditioner can be expensive which can lead to large computational and storage overheads, precluding the application of kernel methods to problems with large datasets. 

A Nystr\"om approximation of the spectral preconditioner is often cheaper to compute and store, and has demonstrated success in practical applications. In this paper we analyze the trade-offs of using such an approximated preconditioner.
Specifically, we show that a sample of logarithmic size (as a function of the size of the dataset) enables the Nyström-based approximated preconditioner to accelerate gradient descent nearly as well as the exact preconditioner, while also reducing the computational and storage overheads.
\end{abstract}


\mysection{Introduction}
\begin{table*}
\caption{Trade-offs of using a Nystr\"om approximation to obtain an approximated preconditioner of level $q$ for the preconditioned gradient descent algorithm. Here $\eps$ is a tunable error parameter of the Nystr\"om approximation, and $\curly{\lambda_i^*}$ is the non-increasing sequence of eigenvalues of the integral operator (defined in \eqref{eq:def:integral_operator}) which depends on the data distribution and kernel (but not $n$). 
The comparison with gradient descent is for illustration and based on a heuristic calculation provided in \cref{section:speedup_pgd,eq:speedup_npgd}. The speed-up calculation does not include setup time which is significant for {\sf PGD}, making it impractical for cold-starts.
}\label{tab:speed-ups_lam}
\begin{center}
\begin{tabular}{lccc}
\toprule
\textbf{Iterative algorithm} 
& \textbf{Speed-up over \textsf{GD}} 
& \textbf{Storage}
& \textbf{Setup time}
\\
\midrule
{\sf PGD} (Preconditioned gradient descent)    
& $\displaystyle\frac{\lambda_1^*}{\lambda_q^*
}$ 
& $qn$ 
& $qn^2$
\\[3ex]
{\sf nPGD} ({\sf PGD w/} approximated preconditioner)
& ${\displaystyle\frac{\lambda^*_1}{\lambda^*_{q}}}\cdot\displaystyle\frac1{(1+\eps)^4}$
& $q\cdot O(\frac{\log^4n}{\eps^4})$
& $q\cdot O(\frac{\log^8n}{\eps^8})$
\\[2ex]
\bottomrule
\end{tabular}
\end{center}
\end{table*}

\begin{table}[t]
    \caption{Notation}\label{tab:notation_intro}
    \begin{center}
    \begin{tabular}{ll}\toprule
        $n$ & number of training samples\\
        $q$ & level of preconditioner\\
        $s$ & number of Nystr\"om samples\\
        $\kappa_1$ & 
        condition number {\sf w/o} preconditioning\\
        $\kappa_q$ &
        condition number {\sf w/} preconditioning\\
        $\kappa_{s,q}$ & condition number {\sf w/} approx.\ preconditioner\\
        \bottomrule
    \end{tabular}
    \end{center}
\end{table}
Deep neural networks have consistently delivered remarkable performance across a wide range of machine learning tasks, setting unprecedented benchmarks, and reshaping the landscape of data modelling. Recent findings have drawn a connection between certain architectures of these networks (such as wide neural networks) and the more classical kernel methods. In the infinite width limit, neural networks converge to a specific form of kernels known as Neural Tangent Kernels (NTKs) \cite{jacot2018neural}. This insight has reignited enthusiasm for studying and applying kernel methods for large-scale machine learning problems. Moreover, emerging research suggests that in certain contexts, such as problems with limited data, kernel methods can surpass neural networks in performance \citep{arora2019harnessing, shankar2020neural,bietti2020deep}. Additionally, kernel methods have been instrumental in elucidating the complex feature learning dynamics within deep neural networks \citep{radhakrishnan2022feature,beaglehole2023mechanism}.  This intersection of classical kernel methods and deep neural networks offers a promising direction for further research towards a deeper understanding of contemporary model architectures. 

This motivates the need for scalable approaches for training kernel models on large datasets.
Off-the-shelf linear system solvers are often not scalable. 
Consequently, new algorithms and implementations for scalable training of kernel models have been proposed over the last few years. 
These include
\cite[{\small \sc Pegasos}]{shalev2007pegasos}, 
\cite[{\small \sc Nytro}]{camoriano2016nytro},  
\cite[{\small \sc Falkon}]{meanti2020kernel, rudi2017falkon}, 
\cite[{\small \sc KeOps}]{charlier2021kernel}, 
\cite[{\small \sc GPyTorch}]{gardner2018gpytorch}, 
\cite[{\small \sc GPFlow}]{matthews2017gpflow}, 
\cite[{\small \sc EigenPro}]{ma2017diving},  
\cite[{\small \sc EigenPro2}]{ma2019kernel} and \cite[{\small \sc EigenPro3}]{abedsoltan2023toward}. 
Some of these can also take advantage of modern hardware such as graphical processing units (GPUs) effectively.

Consider a dataset $(x_1,y_1), \dotsc, (x_n, y_n) \in\RR^d \times \RR$ and a positive definite kernel $K:\Real^d\times\Real^d\rightarrow\Real$.
Let $\Hilbert_K$ be the reproducing kernel Hilbert space (RKHS) corresponding to $K$.
We are interested in the following model: 
the minimum $\Hilbert_K$-norm solution to the quadratic loss optimization problem
\begin{align}\label{eq:square_loss}
    \lmin_{f\in\Hilbert_K}\textsf{L}(f)=\frac1{2n}\sum_{i=1}^n (f(x_i)-y_i)^2,
\end{align}
The number of iterations required for an iterative optimization algorithm, such as gradient descent ({\sf GD}), to converge depends on the condition number of a specific operator known as the empirical covariance operator $\mcK:\Hilbert_K\rightarrow\Hilbert_K$.
Operator $\mcK$ is the Hessian $\grad^2_{\! f}{\msf L}$ of the optimization problem above (see \Cref{sec:main_result}).
For our purposes, we define the condition number of an operator $\mcA:\Hilbert \to \Hilbert$ by
\begin{align}\label{def:kappa}
 \kappa(\mc A) := \frac{\sup_{\norm{f}=1} \norm{\mc A f}}{\inf_{f\in\nullspace(\mc A)^\perp ,\norm{f}=1} \norm{\mc A f}},
\end{align}
where $\nullspace(\cdot)$ is the null-space. A higher condition number means that the iterative algorithm requires more iterations to converge.

In this work we will generally be interested in the condition number of finite-rank operators. For a rank-$k$ operator $\mcA$ our definition can be written as $\kappa(\mcA) = \sigma_1(\mcA)/\sigma_k(\mcA)$, where $\sigma_1(\mcA) \geq \sigma_2(\mcA) \geq \dotsb \geq \sigma_k(\mcA) > 0$ are the singular values of $\mcA$ (defined by operator theory, or matrix theory expressing the operator as a matrix in terms of orthonormal bases).

In practice, condition number $\kappa_1:= \kappa(\mcK) $ is often large. A common approach to overcome this challenge is to use a \emph{preconditioner} operator, $ \mcP:\Hilbert_K\rightarrow\Hilbert_K $. 
This enables solving the same problem with a modified condition number  $\kappa(\mcP\mycirc\mcK)$ instead of $\kappa(\mcK)$, for example in preconditioned gradient descent ({\sf PGD}). 
Many types of preconditioners have been studied in the literature on numerical methods for learning kernel models (see \cite{cutajar2016preconditioning} for a comparison). 
Approximated preconditioners have also been studied, see for example \cite{avron2017faster} and \cite{rudi2017falkon}.

\paragraph{Spectral preconditioning.} 
In this paper we focus on a specific type of preconditioner. 
For the operator $\mcK$, we say $\mcP_q$ is a \textit{spectral preconditioner} if the top-$q$ eigenvalues of $\mcP_q\mycirc\mcK$ are equal to the $q^{\textrm{th}}$ largest eigenvalue of $\mcK$ (and the rest of the eigenvalues stay unchanged), and we refer to $q$ as the \textit{level} of the preconditioner.
The resulting condition number is $\kappa_q:=\kappa(\mc P_q\mycirc\mc K)$.
This preconditioner works well in practice as shown in \cite{ma2017diving}, which introduced EigenPro --- a state-of-the-art method for learning kernel models based on stochastic gradient descent ({\sf SGD}).
\paragraph{Nystr\"om approximation.} 
However, computing and storing a spectral
preconditioner $\mc P_q$ is challenging.
It requires an additional $O(qn^2)$ time \cite{NIPS2000_19de10ad} 
and $O(qn)$ memory \cite{ma2017diving}. 
See the discussion following \Cref{prop:eigenfunction_representation} in  \Cref{section:speedup_pgd} of our paper for details.

The Nyström extension is a technique used to approximate \(\mathcal{K}\) with a smaller number of samples, denoted by \(s\), where \(s \ll n\). Subsequently, we can approximate spectral preconditioner \(\mathcal{P}_q\) using these $s$ samples and denote it as \(\mathcal{P}_{s,q}\). This method requires time \(O(qs^2)\) and storage \(O(qs)\).  

Just like {\sf PGD} we can use this approximated preconditioner to accelerate {\sf GD}. We refer to this version of {\sf PGD} with a Nystr\"om approximated preconditioner as ({\sf nPGD}). The convergence of {\sf nPGD} depends on
\begin{align}
\kappa_{s,q}:= \kappa(\mcP_{s,q}^{\frac12} \mycirc \mcK \mycirc \mcP_{s,q}^{\frac12}),    
\end{align}
as shown later in \Cref{sec:main_result}. 
In fact, EigenPro2.0 \cite{ma2019kernel} applies $\mcP_{s,q}$ to improve the scalability of EigenPro, which used $\mcP_q$, the exact preconditioner.

Prior to our work, there was no rigorous way to choose $s$, the size of the Nystr\"om sample, so that one can guarantee a speed-up when using the approximated preconditioner. 

\subsection{Main contribution}
We show that \textsf{nPGD} can achieve nearly the same speed-up as \textsf{PGD} does 
over \textsf{GD}, while only requiring a polylogarithmic storage overhead and setup-cost. See Table \ref{tab:speed-ups_lam} for the trade-off of using a Nystr\"om approximated preconditioner.

In order to show this, we analyze how many Nystr\"om samples are sufficient to achieve a particular approximation quality of approximated preconditioner.
Our main result (\Cref{theorem:main}) shows that for a given $\eps > 0$, we can achieve
$
\kappa_{s,q}
\leq(1+\eps)^4\kappa_q
$
 if the number of Nyström samples satisfies
$
s = \Omega\bigl(\frac{\log^4 n}{\eps^4}\bigr)
$.

While we only provide explicit speed-up computations for versions of preconditioned \textsf{GD}, our main result can be applied to guarantee speed-ups for other spectrally preconditioned algorithms such as conjugate gradient and Nesterov's accelerated gradient methods.

\paragraph{Organization:} 
In \cref{sec:prelim} we discuss preliminaries needed to state our main result. 
\Cref{sec:main_result} details the problem formulation, and \Cref{sec:maintheorem} provide the statement of the main result in \cref{theorem:main}, followed by its proof in \cref{sec:proof_main}. The intermediate lemmas needed in the proof of \cref{theorem:main} are detailed in \cref{sec:operator}.
\mysection{Preliminaries}
\label{sec:prelim}

\textbf{Notation:} 
We denote by $\Hilbert$ a separable Hilbert space, and by $\Hilbert_K$ a reproducing kernel Hilbert space (RKHS) associated with a symmetric positive definite kernel $K:\Real^d\times\Real^d\rightarrow\Real$. We assume that $K$ is continuous and bounded.
For any point $x\in \Real^d$, the function $K(x,\cdot):\Real^d\rightarrow\Real$ belongs to $\Hilbert_K$. 
Let $L(\Hilbert)$ denote the set of bounded linear operators from $\Hilbert$ to $\Hilbert$.
For operators and when defined, we will denote the Hilbert-Schmidt norm by $\hsnorm{\cdot}$ and the operator norm by $\opnorm{\cdot}$. Without any subscript, $\snorm{\cdot}$ denotes the norm of $\mcH$ or $\mcH_K$ when there is no confusion.

\paragraph{Square-root operator:} For a self-adjoint positive semidefinite operator $\mc A$, we denote by $\mcA^{\frac12}$ the unique self-adjoint square-root of $\mc A$, such that $\mcA^{\frac12}\mycirc\mcA^{\frac12}=\mc A$. Thus if $\mc A$ has an eigen-decomposition $\mc A=\sum_i\sigma_i\phi_i\otimes \phi_i$ (with $\sigma_i\geq 0$), we define
\begin{align*}
    \mcA^{\frac12}:=\sum_i\sqrt{\sigma_i}~\phi_i\otimes \phi_i.
\end{align*}
\paragraph{Eigenvalue thresholding:}
We will need the following elementary version of functional calculus.
It will be convenient to represent preconditioner operators as the result of modifications to the eigenvalues of a covariance operator.
We will need an operation that replaces every eigenvalue below a threshold by the threshold.

For self-adjoint finite rank $\mcA \in L(\Hilbert)$ with eigendecomposition $\mcA = \sum_{i \in I} \mu_i \phi_i \otimes \phi_i$, $\mu_i \neq 0$ and for a continuous function $h:\RR \to \RR$, the corresponding operator function is 
$
 h(\mcA) = \sum_{i \in I} h(\lambda_i) \phi_i \otimes \phi_i + h(0) \proj_{\nullspace(\mcA)}$.  
We will need the following thresholding function:
$h_{\alpha}(x) = \lmax\{x, \alpha \}$. When $\Hilbert$ is finite dimensional (and more generally, but this is all we need), we have
$
  h_\alpha(\mcA) = h_0(\mcA-\alpha \mcI) + \alpha \mcI$.  

\mysection{Problem Setup}\label{sec:main_result}
Let $\rho$ be a probability measure on $\Real^d$.
Let $X_n=\curly{x_i}_{i=1}^n$ be the training samples drawn i.i.d.~from $\rho$. 
Let $(i_1,i_2,\ldots, i_s)$ be a random tuple of $s\leq n$ distinct indices $1\leq i_k\leq n$ chosen independently of $X_n$, and define $X_s=\curly{x_{i_k}}_{k=1}^s$ as the Nystr\"om samples.
Namely, for our results to hold the distribution of the tuple can be arbitrary as long as it is independent of $X_n$ and the indices are all distinct with probability $1$.
Two natural choices of this distribution are \emph{uniform sampling in $[n]$ without replacement} and \emph{any fixed tuple}, e.g.\ $i_k = k$.

Consider $n$ targets $y_1,y_2,\ldots,y_n\in\Real$. Assume that if $x_i=x_j$, then $y_i=y_j$. Our model is the unique minimum $\mcH_K$-norm solution $f^*$ to problem \eqref{eq:square_loss}. 
Assume $K$ is a bounded, continuous, symmetric positive definite kernel.
Note that under our assumptions on $K$ we have that $f^*$ interpolates the data, namely the optimal value of problem \eqref{eq:square_loss} is $0$.

Define the following operators in $L(\Hilbert_K)$:
\begin{align}\label{eq:def:integral_operator}
    &\mcT:= \int_{\Real^d} K(x,\cdot)\otimes K(x,\cdot) \rho(\dif x)  
    \\\label{eq:def:covariance_operator}
    &\mcK := \displaystyle\frac1n\sum_{i=1}^n K(x_i,\cdot)\otimes K(x_i,\cdot)
    \\\label{eq:def:subsampled_covariance_operator}
    &\mcK':=\displaystyle\frac1s\sum_{i=1}^s
    K(x_{i}',\cdot)\otimes K(x_{i}',\cdot).
\end{align}
For $f\in\Hilbert_K$, due to the reproducing property of kernel $K$, 
the above operators act as follows: $\mc Tf(x) = \int K(x,z)f(z)\rho(\mathrm{d} z)$, $\mc Kf(x) = \frac1n\sum_{i=1}^n K(x,x_i)f(x_i)$, and $\mc K'f(x) = \frac1s\sum_{i=1}^sK(x,x_{i}')f(x'_{i})$.

By the linearity of the trace we can show that
\begin{align}\label{eq:trace_bound}
    \trace(\mc T)\leq \beta(K),\quad\text{and}\quad
    \trace(\mc K)\leq \beta(K),
\end{align}
where we define
\begin{align}\label{eq:def:beta}
    \beta(K) := \max{x\in \Real^d} K(x,x).
\end{align}
In \eqref{eq:trace_bound} we have used the fact that $\trace(K(x,\cdot)\otimes K(x,\cdot))=\inner{K(x,\cdot),K(x,\cdot)}_{\Hilbert_K}=K(x,x).$

%
%

Next, let $\psi_i^*$ be an eigenfunction of $\mc T$ with eigenvalue $\lambda_i$, i.e., $\mc T\psi_i^*=\lambda_i\psi_i^*.$  Similarly denote by $(\lambda_i, \psi_i)$ the eigenpairs of $\mc K$ and by $(\lambda_i', \psi_i')$ the eigenpairs for $\mc K'$. 
    Note that $\mc T$ is a compact operator whereas $\mc K$ and $\mc K'$ are empirical approximations of $\mc T$. Furthermore, $\mc T,\mc K,$ and $\mc K'$ have non-negative eigenvalues which we assume are ordered as $\lambda_1^*\geq \lambda_2^*\geq \dotsb \geq 0$, and similarly $\lambda_1\geq \lambda_2 \geq \dotsb\geq\lambda_n\geq 0,$ and $\lambda'_1\geq \lambda'_2 \geq \dotsb \geq\lambda'_s\geq 0.$ Hence we have eigen-decompositions for these operators as written below.
    \begin{align*}
        \mc T = \sum_{i=1}^{\infty} \lambda_i^*\cdot &\psi_i^*\otimes\psi_i^*, \qquad
        \mc K = \sum_{i=1}^{n} \lambda_i\cdot \psi_i\otimes\psi_i
        , \\\qquad &\mc K' = \sum_{i=1}^{s} \lambda_i'\cdot \psi_i'\otimes\psi_i'.
    \end{align*}
    For a fixed $q\leq s,$ we define the following preconditioners with the assumption that $\lambda_{q}, \lambda_{q}'>0$ as below,
    \begin{align}    \label{eq:preconditioner}
    &\mcP_q := \mcI - \sum_{i=1}^{q-1} \round{1-\frac{\lambda_{q}}{\lambda_i}}\psi_i\otimes\psi_i 
    \\\label{eq:nystrom_preconditioner}
    &\mcP_{s,q} := \mcI - \sum_{i=1}^{q-1} \round{1-\frac{\lambda_{q}'}{\lambda_i'}}\psi_i'\otimes\psi_i'.
\end{align}
With the above definition, one can verify that
\begin{align}
    \sqrtop{\mcP}_{s,q} = \mc I - \sum_{i=1}^{q-1}\round{1-\sqrt{\frac{\lambda'_{q}}{\lambda'_i}}}\psi_i'\otimes\psi_i'.
\end{align}

\subsection{Gradient Descent and preconditioning} 
\label{section:gradient_descent}

Define $\mrm b:=\frac1n\sum_{i=1}^n K(\cdot,x_i)y_i\in\Hilbert_K,$ and note that $\mrm b=\mc Kf^*$.

Due to the reproducing property of the RKHS, we have $f(x)=\inner{f,K(x,\cdot)}_{\Hilbert_K}$ for all $f\in\Hilbert_K$ and for all $x\in\Real^d,$ whereby the Fr\'echet derivative $\grad_{\! f}f(x)$ equals $K(\cdot,x)$ for all $f\in\Hilbert_K$ and $x\in\Real^d.$ Using a similar argument, observe that $\grad_{\!f}\textsf{L}(f)=\mc Kf-\mrm b=\mc K(f-f^*)$, and $\mc K=\grad_{\!f}^2\textsf{L}(f)$ is the Hessian of the optimization problem \eqref{eq:square_loss}.

One step of Gradient Descent (GD) with learning rate $\eta$ is given by:
\begin{align}
    f^t &= f^{t-1} - \eta \nabla_{f} \mathsf{L}(f^{t-1})\tag{\sf GD}\\
    &=(\mathcal{I} - \eta\mathcal{K})f^{t-1}+\eta\mrm b.\nonumber
\end{align}
where we have used the fact that
$\nabla_{f} \mathsf{L}(f) = \sum_{i=1}^n(f(x_i)-y_i)K(x_i,\cdot) = 
\mathcal{K}f-\mrm b.$

Then, for the purpose of convergence analysis, one can add and subtract $f^*$ and verify that the above equation can be rewritten as,
\begin{align*}
    f^t - f^* 
    &=(\mathcal{I} - \eta\mathcal{K})(f^{t-1}-f^*)
\end{align*}
where we have used the fact that $\mrm b=\mc K f^*.$

The convergence rate of this iteration is governed by the condition number of the operator $\mc K.$ 
Standard proof techniques for gradient descent, e.g. \cite[Theorem 3.6]{garrigos2023handbook} allow us to show that with the choice $\eta=1/\kappa(\mc K)=1/\kappa_1$ we can find $f^t$ such that
\begin{align*}
    \norm{f^t - f^*}^2_{\Hilbert_K} \leq \tau\norm{f^0 - f^*}^2_{\Hilbert_K}
\end{align*}
in $t=\kappa_1\log\round{\frac1\tau}$ iterations.

\paragraph{Preconditioned gradient descent:} Using the preconditioner $\mcP_q$ with level $q$, the update equations for preconditioned gradient descent with step size $\eta_q$ are
\begin{align}
    f^t &= f^{t-1} - \eta_q \mcP_q\nabla_{f} \mathsf{L}(f^{t-1}) \tag{\sf PGD}\\
    &=(\mathcal{I} - \eta_q\mcP_q\mycirc\mathcal{K})f^{t-1}+\eta\mcP_q\mrm b.\nonumber
\end{align}
whose convergence rate is determined by the condition number of $\mc P_q\mycirc\mc K$ (since $\mc P_q$ and $\mc K$ commute and therefore $\mc P_q \mycirc \mc K$ is self-adjoint). A similar argument as for {\sf GD} shows that {\sf PGD} is equivalent to
\begin{align*}
    f^t - f^* = (\mc I - \eta_q\mc P_q\mycirc\mc K)(f^{t-1} -f^*).
\end{align*}
With the choice $\eta_q=1/{\kappa(\mc P_q\mycirc\mc K)}=1/\kappa_q$, the iteration converges within $\kappa_q\log\round{\tfrac1\tau}$ steps.

\paragraph{Approximated preconditioner:}
Preconditioning with an approximated preconditioner $\mcP_{s,q}$ modifies the update rule to:
\begin{align}
f^t &= f^{t-1} - \eta_{s,q} \mcP_{s,q}\nabla_{f} \mathsf{L}(f^{t-1}) \tag{\sf nPGD}\\
&= (\mathcal{I} -\eta_{s,q} \mcP_{s,q} \mycirc \mathcal{K})f^{t-1}+\eta_{s,q}\mc P_{s,q}\mrm b.\nonumber
\end{align}
which can similarly be shown to be equivalent to
\begin{align*}
    f^t-f^*=(\mathcal{I} -\eta_{s,q} \mcP_{s,q} \mycirc \mathcal{K})(f^{t-1}-f^*).
\end{align*}
Observe that $\sqrtop{\mc P_{s,q}}$ is invertible, whereby introducing a change of variables, $g^{t} = \sqrtopinv{\mcP_{s,q}} f^t$, and $g^*=\sqrtopinv{\mcP_{s,q}} f^*$, we can show {\sf nPGD} is equivalent to
\begin{align*}
g^{t}-g^* = (\mathcal{I} - \eta_{s,q}\sqrtop{\mcP_{s,q}} \mycirc \mathcal{K} \mycirc 
\sqrtop{\mcP_{s,q}})(g^{t-1}-g^*)
\end{align*}
whose convergence rate is determined by the condition number of the self-adjoint operator $\sqrtop{\mcP_{s,q}} \mycirc \mathcal{K} \mycirc 
\sqrtop{\mcP_{s,q}}.$ Specifically with $\eta_{s,q}=1/\kappa(\sqrtop{\mc P_{s,q}}\mycirc\mc K\mycirc\sqrtop{\mc P_{s,q}})=1/\kappa_{s,q}$, the iteration converges in $\kappa_{s,q}\log\round{\frac1\tau}$ steps.
\begin{remark}
We consider $\sqrtop{\mcP_{s,q}} \mycirc \mathcal{K} \mycirc 
\sqrtop{\mcP_{s,q}}$  to deal with the fact that $\mc P_{s,q}\mycirc\mc K$ is not self-adjoint. This issue does not occur for $\mc P_{q}\mycirc\mc K$, since $\mc P_{q}$ and $\mc K$ commute whereby $\mc P_{q}\mycirc\mc K$ is self-adjoint.
\end{remark}
{Our main result in the next section  provides sufficient conditions on $s$, under which we can obtain a multiplicative approximation of $\kappa(\sqrtop{\mcP_{s,q}} \mycirc \mathcal{K} \mycirc 
\sqrtop{\mcP_{s,q}})$ in terms of $\kappa(\mcP_{q} \mycirc \mathcal{K})$.}

\subsection{Speed-up of {\sf PGD} over {\sf GD}}\label{section:speedup_pgd}

Here we provide a summary of the computational and storage costs of three algorithms --- gradient descent ({\sf GD}), preconditioned gradient descent ({\sf PGD}), and preconditioned gradient descent with a Nystr\"om approximated preconditioner ({\sf nPGD}).

The time for running {\sf GD} is
\begin{align}\label{eq:time_gd}
    T_{\sf GD} = {\kappa_1 n^2\log\round{\tfrac1\tau}}
\end{align}
where $n^2$ is the time per iteration for calculating $\grad_{\!f}\textsf{L}(f^t)$ and $\kappa_1\log\round{\tfrac1\tau}$ is the number of iterations required to achieve error $\tau.$ Note the $n^2$ complexity arises since we need to compute $\grad_{\!f}\msf L(f^t)=\mc K f^t.$ 
If we use the basis expansion $f^t=\sum_{i=1}^n K(x_i,\cdot)\alpha_i$, we have $\mc Kf^t=\sum_{i=1}^n\sum_{j=1}^n K(x_j,\cdot)K(x_i,x_j)\alpha_i$, which requires $n^2$ steps. One can argue that the quadratic complexity is optimal under our assumptions.

Similarly, the time taken for preconditioned gradient descent is
\begin{align}\label{eq:time_pgd}
    T_{\sf PGD} = {\kappa_q\round{n^2 + 2qn}\log\round{\tfrac1\tau}},
\end{align}
where $2nq$ is the per iteration overhead to apply the preconditioner, i.e. calculating $\grad_{\!f}\textsf{L}(f^t)\mapsto\mc P_{q}\grad_{\!f}\textsf{L}(f^t)$. Thus the speed-up of {\sf PGD} over {\sf GD} is
\begin{align}\label{eq:speedup_pgd}
    \frac{T_{\sf GD}}{T_{\sf PGD}} = \frac{\kappa_1}{\kappa_q + q/n} \approx \frac{\kappa_1}{\kappa_q} = \frac{\lambda_{1}}{\lambda_q} \approx \frac{\lambda_1^*}{3\lambda_q^*}
\end{align}
when $q=O(1)$, and the last approximation holds because of a concentration argument that allows us to control $\abs{\lambda_i-\lambda_i^*}$ for large enough $n,$ see equation \eqref{eq:populationlambda}.

Representing the spectral preconditioner $\mc P_q\equiv \curly{\psi_i}_{i=1}^q$ requires storing $q$ vectors of length $n$ as explained in the following lemma. 
\begin{proposition}\label{prop:eigenfunction_representation}
    Let $\e=(e_{j})\in\Real^n$ be an eigenvector of the matrix $\round{K(x_j,x_k)}_{1\leq i,j\leq n},$ with eigenvalue $n\lambda$, then
    $\psi = \sum_{j=1}^n e_j K(\cdot, x_j)$
    is an eigenfunction of $\mc K,$ with eigenvalue $\lambda.$
\end{proposition}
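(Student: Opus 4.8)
The plan is to verify directly that $\psi = \sum_{j=1}^n e_j K(\cdot, x_j)$ satisfies $\mc K \psi = \lambda \psi$, by computing the action of $\mc K$ on $\psi$ using the reproducing property. Recall from the excerpt that $\mc K f(x) = \frac{1}{n}\sum_{i=1}^n K(x, x_i) f(x_i)$ for any $f \in \Hilbert_K$. So first I would substitute $f = \psi$ into this formula and evaluate $\psi(x_i) = \sum_{j=1}^n e_j K(x_i, x_j)$, which is exactly the $i$-th coordinate of the matrix–vector product $(K(x_j,x_k))\e$.

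Next I would use the hypothesis that $\e$ is an eigenvector of the kernel matrix $\mathbf{K} := (K(x_j,x_k))_{j,k}$ with eigenvalue $n\lambda$, i.e. $\sum_{j=1}^n K(x_i,x_j) e_j = n\lambda\, e_i$ for every $i$. Plugging this in gives
\begin{align*}
    \mc K\psi(x) &= \frac1n\sum_{i=1}^n K(x,x_i)\,\psi(x_i)
    = \frac1n\sum_{i=1}^n K(x,x_i)\sum_{j=1}^n K(x_i,x_j)e_j \\
    &= \frac1n\sum_{i=1}^n K(x,x_i)\,(n\lambda\, e_i)
    = \lambda\sum_{i=1}^n e_i K(x,x_i)
    = \lambda\,\psi(x).
\end{align*}
Since this identity holds pointwise for all $x\in\Real^d$ and both $\mc K\psi$ and $\psi$ lie in $\Hilbert_K$, and elements of an RKHS are determined by their pointwise values, we conclude $\mc K\psi = \lambda\psi$ as elements of $\Hilbert_K$. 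I would also note $\psi \neq 0$ when $\lambda \neq 0$ (if $\lambda\neq 0$ then $\mathbf K\e = n\lambda\e \neq 0$, so $\psi$, whose pointwise values include a nonzero coordinate of $\mathbf K\e$, is nonzero), so it is a genuine eigenfunction.

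There is essentially no obstacle here — the statement is a routine consequence of the reproducing property and the definition of $\mc K$; the only mild subtlety worth a sentence is the passage from a pointwise identity to an identity in $\Hilbert_K$, which is justified because point evaluation is continuous (indeed bounded) on an RKHS, so two RKHS elements agreeing everywhere are equal. If one wanted, the same computation can be phrased more abstractly: $\mc K = \frac1n\sum_i K(x_i,\cdot)\otimes K(x_i,\cdot)$, so $\mc K\psi = \frac1n\sum_i \inner{\psi, K(x_i,\cdot)} K(x_i,\cdot) = \frac1n\sum_i \psi(x_i) K(x_i,\cdot)$, and then proceed identically. I would present the pointwise version since it is the most transparent.
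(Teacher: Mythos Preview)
Your proof is correct and follows essentially the same direct computation as the paper: expand $\mc K\psi$ using the definition of $\mc K$, regroup the double sum, and invoke the eigenvector relation $\sum_j K(x_k,x_j)e_j = n\lambda e_k$. The paper's version is terser and omits your remarks on pointwise-versus-$\Hilbert_K$ equality and on $\psi\neq 0$, but the argument is the same.
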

\begin{proof}
    Observe that if $\psi$ is defined as above, $n\mc K\psi=n\sum_{j=1}^ne_j\mc K K(\cdot,x_j)=\sum_{j,k=1}^ne_jK(x_k,x_j)K(\cdot,x_k)=\sum_{k=1}^n K(\cdot, x_k)\round{\sum_{j=1}^n K(x_k,x_j)e_j}$. The term in the parenthesis equals $n\lambda e_{k},$ since $\e$ is an eigenvector. 
\end{proof}
Thus storing the preconditioner $\mc P_q$ requires storing $qn$ floats. Additionally, the cost of calculating the $q$ eigenvectors needed to represent $\curly{\psi_i}_{i=1}^q$ is $O(qn^2)$.

Similar to \eqref{eq:time_pgd}, for {\sf nPGD} we have
\begin{align}\label{eq:time_npgd}
    T_{\sf nPGD} = 
    {\kappa_{s,q}\round{n^2 + 2sn + 2sq}\log\round{\tfrac1\tau}},
\end{align}
where $2ns+2nq$ is the per iteration overhead in calculating $\grad_{\!f}\textsf{L}(f^t)\mapsto\mc P_{s,q}\grad_{\!f}\textsf{L}(f^t)$. Hence we get that
\begin{align}\label{eq:speedup_npgd}
    \frac{T_{\sf nPGD}}{T_{\sf GD}} = \frac{\kappa_1}{\kappa_{s,q} + s/n + sq/n^2} \approx \frac{\kappa_1}{\kappa_{s,q} + s/n }
\end{align}
when $q=O(1).$

An argument similar to the one following \Cref{prop:eigenfunction_representation} for {\sf PGD} shows that
representing $\mc P_{s,q}\equiv\curly{\psi_i'}_{i=1}^q$ requires storing eigenvectors of the $s\times s$ matrix $\round{K(x_j',x_k')}_{1\leq j,k\leq s}$, which requires storing $qs$ floats, while computing $q$ eigenvectors requires time $O(qs^2).$ 

\mysection{Main Result}\label{sec:maintheorem}
The main result of the paper provides a multiplicative bound on the condition number for the {\sf nPGD} iterations in terms of the condition number for {\sf PGD}. To that end, recall the definitions of $\beta(K)$ from \eqref{eq:def:beta}, and the condition number $\kappa$ of an operator from \eqref{def:kappa}.

We are ready to state the main result of this paper.

\begin{theorem}\label{theorem:main}
    Assume that kernel $K:\RR^d \times \RR^d \to \RR$ is symmetric positive definite, continuous and bounded.
    Consider the operators $\mcK$, $\mc P_q$, and $\mc P_{s,q}$ defined in equations \eqref{eq:def:covariance_operator},\eqref{eq:preconditioner}, and \eqref{eq:nystrom_preconditioner}, respectively. 
    Let $q \geq 1$ be such that $\lambda_q^*>0$. For any $\varepsilon>0$ and $\delta\in(0,1)$,  we have
    \begin{align*}
    \kappa(\sqrtop{\mcP_{s,q}}
    \mycirc
    \mcK\mycirc\sqrtop{\mcP_{s,q}})
    \leq 
    (1+\eps)^4
    \operatorname{\kappa}(\mcP_q\mycirc\mcK)
    \end{align*}
    with probability at least $1-\delta$ (over the randomness of $X_n$ and $X_s$) when
    \begin{align}\label{eq:s_bound}
        s \geq \lmax &\biggl\{c_1 C_{K,q}^2,\frac{c_2 C_{K,q}^4\log^4( n + 1)}{\varepsilon^4}\biggr\}\log\round{\tfrac4\delta}, 
    \end{align}
    where $C_{K,q}:=\frac{\beta(K)}{\lambda_q^*}$, and $c_1, c_2$ are universal constants.
\end{theorem}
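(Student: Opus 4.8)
The plan is to reduce the statement to a sequence of spectral comparisons, where the only probabilistic ingredient is a concentration bound on $\opnorm{\mcK'-\mcT}$ (equivalently $\opnorm{\mcK'-\mcK}$ plus $\opnorm{\mcK-\mcT}$). First I would fix the target accuracy: since the final bound has $(1+\eps)^4$, I expect four places where a factor of $(1+\eps)$ (or $1/(1+\eps)$, or $(1+\eps)^2$ from a square-root operator) enters. The natural decomposition is: (i) the effect of replacing $\mcP_{s,q}$ by a ``population-level'' thresholding of $\mcK$; (ii) the effect of conjugating $\mcK$ by $\sqrtop{\mcP_{s,q}}$ versus the clean commuting product $\mcP_q\mycirc\mcK$; (iii) control of the smallest and largest singular values of $\sqrtop{\mcP_{s,q}}\mycirc\mcK\mycirc\sqrtop{\mcP_{s,q}}$ separately, each up to a $(1+\eps)$ factor. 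I would carry these out in the order: first the concentration bound, then the eigenvalue/eigenspace perturbation consequences, then assemble the condition-number bound.

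The first step is a matrix-Bernstein / Hilbert–Schmidt concentration argument. Writing $\mcK' = \frac1s\sum_{k=1}^s \xi_k\otimes\xi_k$ with $\xi_k = K(x_{i_k},\cdot)$, conditionally on $X_n$ the summands are i.i.d.\ (for uniform-without-replacement one uses a sampling-without-replacement Bernstein inequality, or just bounds it by the with-replacement version up to constants), bounded in operator norm by $\beta(K)$, with mean $\mcK$. An operator Bernstein inequality then gives, with probability $\geq 1-\delta/2$,
\begin{align*}
\opnorm{\mcK' - \mcK} \;\lesssim\; \beta(K)\sqrt{\frac{\log(n+1)\,\log(4/\delta)}{s}} \;+\; \frac{\beta(K)\log(n+1)\,\log(4/\delta)}{s},
\end{align*}
where the $\log(n+1)$ appears from the intrinsic-dimension / effective-rank term in the Bernstein bound (bounded by $n+1$ since $\mcK$ has rank $\leq n$, and this is where the $\log^4 n$ in \eqref{eq:s_bound} ultimately comes from). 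Combining with a similar bound for $\opnorm{\mcK-\mcT}$ — or, cleaner, comparing $\mcK'$ directly to $\mcT$ whose eigenvalues $\lambda_i^*$ are $n$-free — the choice of $s$ in \eqref{eq:s_bound} forces $\opnorm{\mcK'-\mcT}\leq c\,\eps\,\lambda_q^*$ for a small constant $c$, and likewise $\opnorm{\mcK-\mcT}$ small. By Weyl's inequality this gives $|\lambda_i' - \lambda_i^*|\leq c\eps\lambda_q^*$ and $|\lambda_i-\lambda_i^*|\leq c\eps\lambda_q^*$ for all $i$; in particular all of $\lambda_q,\lambda_q',\lambda_q^*$ are comparable up to $(1+O(\eps))$ factors, which is what licenses replacing $\lambda_q'/\lambda_i'$ ratios by $\lambda_q/\lambda_i$ ratios inside the preconditioners.

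The second step translates this into a bound relating $\sqrtop{\mcP_{s,q}}\mycirc\mcK\mycirc\sqrtop{\mcP_{s,q}}$ and $\mcP_q\mycirc\mcK$. Here I would use the functional-calculus setup from the preliminaries: $\mcP_q^{-1/2}$ is $h(\mcK)$ for an explicit Lipschitz-on-the-relevant-range function $h$ built from $h_{\lambda_q}$, and similarly for $\mcP_{s,q}^{-1/2}$ in terms of $\mcK'$. The operator $\sqrtop{\mcP_{s,q}}\mycirc\mcK\mycirc\sqrtop{\mcP_{s,q}}$ is, up to the perturbation, the thresholded operator $h_{\lambda_q^*}(\mcT)$-type object, whose nonzero spectrum lies in $[\lambda_q^*,\lambda_1^*]$ with the right multiplicities; a perturbation bound (Davis–Kahan for the eigenprojections $\psi_i'\otimes\psi_i'$ vs.\ $\psi_i^*\otimes\psi_i^*$, guarded by an eigengap assumption that we can sidestep by working with the whole top-$q$ projection $\proj_{\le q}$ rather than individual eigenvectors, since only the top-$q$ \emph{subspace} and the value $\lambda_q'$ matter for $\mcP_{s,q}$) then shows
\begin{align*}
\opnorm{\,\sqrtop{\mcP_{s,q}}\mycirc\mcK\mycirc\sqrtop{\mcP_{s,q}} - \mcP_q\mycirc\mcK\,} \;\leq\; c'\,\eps\,\lambda_q^*.
\end{align*}
Finally, in the third step I bound $\kappa$ of the left-hand operator: its top singular value is at most $\lambda_1 + c'\eps\lambda_q^* \le \lambda_1^*(1+\eps)^2$-ish, and its smallest \emph{nonzero} singular value is at least $\lambda_q - c'\eps\lambda_q^* \ge \lambda_q^*(1-c''\eps) \ge \lambda_q/(1+\eps)^2$-ish, using that the nonzero spectrum of $\mcP_q\mycirc\mcK$ is exactly $\{\lambda_q\}\cup\{\lambda_i\}_{i\ge q}$ and $\kappa(\mcP_q\mycirc\mcK)=\lambda_q^{-1}\max(\lambda_q,\lambda_{q+1},\dots)=\lambda_1(\text{if }q=1)$ — more precisely $\kappa(\mcP_q\mycirc\mcK)=\lambda_1/\lambda_q$ for the top eigenvalue being unchanged above index $q$... wait, one must be careful: the top-$q$ eigenvalues of $\mcP_q\mycirc\mcK$ all equal $\lambda_q$, so $\kappa(\mcP_q\mycirc\mcK)=\lambda_q/\lambda_n$ if $\mcK$ is full rank, but here we restrict to $\nullspace^\perp$ and the relevant statement is the one in the paper; I would match the normalization used in \eqref{def:kappa}. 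Ratioing the two one-sided bounds and collecting the four $(1+\eps)$ factors yields $\kappa(\sqrtop{\mcP_{s,q}}\mycirc\mcK\mycirc\sqrtop{\mcP_{s,q}}) \le (1+\eps)^4\kappa(\mcP_q\mycirc\mcK)$.

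The main obstacle I anticipate is step two: controlling the \emph{perturbed conjugation} $\sqrtop{\mcP_{s,q}}\mycirc\mcK\mycirc\sqrtop{\mcP_{s,q}}$ uniformly, because $\mcP_{s,q}$ is built from $\mcK'$ while the middle factor is $\mcK$ — the two do not commute, so I cannot diagonalize simultaneously and must instead split $\mcK = \mcK' + (\mcK-\mcK')$ and carefully bound cross terms of the form $\sqrtop{\mcP_{s,q}}\mycirc(\mcK-\mcK')\mycirc\sqrtop{\mcP_{s,q}}$ (easy, since $\opnorm{\sqrtop{\mcP_{s,q}}}\le 1$) versus $\sqrtop{\mcP_{s,q}}\mycirc\mcK'\mycirc\sqrtop{\mcP_{s,q}}$ (which \emph{does} diagonalize, equaling $h_{\lambda_q'}(\mcK')$ restricted appropriately) and then comparing $h_{\lambda_q'}(\mcK')$ to $h_{\lambda_q}(\mcK)$ via Lipschitzness of $h_\alpha$ in \emph{both} $\alpha$ and the operator argument. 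Getting the dependence on $\eps$ to come out as exactly the fourth power, rather than a worse polynomial, will require being economical at each comparison — in particular using that near the threshold the function $x\mapsto\sqrt{\max(\lambda_q/x,1)}$ is well-conditioned, so perturbations of size $\eps\lambda_q^*$ translate to relative perturbations of size $O(\eps)$ and not worse.
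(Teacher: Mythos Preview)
Your proposal has a genuine gap at the ``additive $\to$ multiplicative'' conversion in step three. You aim for an additive bound
\[
\opnorm{\sqrtop{\mcP_{s,q}}\mycirc\mcK\mycirc\sqrtop{\mcP_{s,q}} - \mcP_q\mycirc\mcK}\;\leq\;c'\eps\,\lambda_q^*
\]
and then read off the condition number. But the smallest nonzero eigenvalue of $\mcP_q\mycirc\mcK$ is $\lambda_n$ (the preconditioner flattens the top $q$ eigenvalues to $\lambda_q$ and leaves $\lambda_{q+1},\dots,\lambda_n$ untouched), not $\lambda_q$. You half-notice this (``wait, one must be careful\dots $\kappa(\mcP_q\mycirc\mcK)=\lambda_q/\lambda_n$'') but then proceed as if the bottom eigenvalue were $\lambda_q$. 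An additive error of size $\eps\lambda_q^*$ tells you nothing about $\lambda_n - c'\eps\lambda_q^*$ when $\lambda_n \ll \lambda_q^*$, which is exactly the ill-conditioned regime preconditioning is meant to address. No amount of care with Davis--Kahan or with the splitting $\mcK=\mcK'+(\mcK-\mcK')$ fixes this: the cross term $\sqrtop{\mcP_{s,q}}(\mcK-\mcK')\sqrtop{\mcP_{s,q}}$ really does live at scale $\opnorm{\mcK-\mcK'}\sim \eps\lambda_q^*$, and that scale is simply too coarse compared with $\lambda_n$.

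The paper sidesteps this entirely by never comparing the full preconditioned operators additively. Instead it compares the \emph{preconditioners themselves} multiplicatively: it shows $(1+\eps)^{-1}\norm{\sqrtop{\mcP_q}f}\leq \norm{\sqrtop{\mcP_{s,q}}f}\leq (1+\eps)\norm{\sqrtop{\mcP_q}f}$ for all $f$, by bounding $\opnorm{\mcP_q^{-1/2}-\mcP_{s,q}^{-1/2}}$ (the inverses are the natural objects, being thresholded versions of $\sqrtop{\mcK}$ and $\sqrtop{\mcK'}$) and observing that this additive bound becomes multiplicative because both inverse operators are bounded below by $1$. Since $\sqrtop{\mcP_q}$ and $\sqrtop{\mcP_{s,q}}$ are invertible, this multiplicative sandwich transfers directly to $\kappa(\sqrtop{\mcP_{s,q}}\mycirc\sqrtop{\mcK})$ versus $\kappa(\sqrtop{\mcP_q}\mycirc\sqrtop{\mcK})$ without $\lambda_n$ ever appearing in a denominator. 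A second point: in the paper the $\log(n+1)$ factor does \emph{not} come from an intrinsic-dimension term in matrix Bernstein (the concentration step is dimension-free, via the Hilbert--Schmidt bound of Rosasco et al.); it enters later, through the operator-Lipschitz constant of the positive-part map $h_0(\cdot)=(\cdot+|\cdot|)/2$ on a $(\leq\! n{+}s)$-dimensional subspace, which costs $O(\log(\dim))$. Your attribution of the $\log n$ to Bernstein is therefore off, and this matters because the paper's route through $\opnorm{\sqrtop{\mcK}-\sqrtop{\mcK'}}\leq \opnorm{\mcK-\mcK'}^{1/2}$ is what produces the $s\gtrsim \eps^{-4}\log^4 n$ scaling rather than a quadratic one.
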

\subsection{Speed-up of {\sf nPGD} over {\sf GD}}
\label{section:speedup_npgd} 
    Consider the fact that the per iteration complexity of all 3 algorithms {\sf GD}, {\sf PGD} and {\sf nPGD} are $O(n^2)$. Consequently the speed-up in {\sf PGD} and {\sf nPGD} over {\sf GD} arises due to fewer number of iterations.

    The level of the preconditioner $q$ is implicitly constrained due to \eqref{eq:s_bound}. For simplicity, we will provide the analysis for $q=O(1)$, i.e., it cannot depend on $n.$ 
    However, $\lambda_i$s also depend on $n$. To keep the dependence on $n$ straightforward we can use the bounds $\frac{\lambda_i^*}{2}\leq \lambda_i\leq \frac{3\lambda_i^*}{2}$ (see \eqref{eq:populationlambda}), which hold under the assumptions of \cref{theorem:main}. This helps because $\lambda_i^*$s are independent of $n$.

    Theorem \ref{theorem:main} allows $\delta$ to depend on $n$, but we will assume $\delta=O(1)$ for simplicity. We also have $C_{K,q}=O(1)$ since we assume $q=O(1)$. Consequently $s=\Omega(\frac{\log^4n}{\eps^4})$ is optimal. 
    Hence, following \eqref{eq:speedup_npgd}, we can write
    \begin{align*}
        \frac{T_{\sf GD}}{T_{\sf nPGD}}=\frac{\kappa_1}{\kappa_{s,q}+s/n}\approx\frac{\kappa_1}{\kappa_{s,q}}\approx\frac{\kappa_1}{\kappa_q(1+\eps)^4}\approx\frac{\lambda_1^*}{3\lambda_q^*(1+\eps)^4}
    \end{align*}
    following a concentration argument similar to \eqref{eq:speedup_pgd} that controls $\abs{\lambda_i'-\lambda_i^*}$ under the assumptions on $s$, see \eqref{eq:populationlambda}. 
    We omit the constant $3$ in the denominator since it becomes arbitrarily close to $1$ for large $n$.

\mysection{Proof of Main Result}
\label{sec:proof_main}


Recall the definitions of the operators $\mc T, \mc K, \mc K',\mc P_q$ and $\mc P_{s,q}$ in equations (\ref{eq:def:integral_operator}--\ref{eq:nystrom_preconditioner}).


{The proof proceeds in three key steps: 
(i)~We start by stating (in Proposition \ref{propos:operator_disturb}) that $\opsnorm{\sqrtop{\mc K}-\sqrtop{\mc K'}}$ can be made arbitrarily small if $s$ and $n$ are large enough.
This follows from a concentration result in \cite{rosasco2010learning} that we state for our case in \cref{cor:rosasco}.
(ii)~Next, in Proposition \ref{prop:ratio_to_condition_number} we show why $\hfrac{\snorm{\sqrtop{\mc P}_qf}}{\snorm{\sqrtop{\mc P}_{s,q}f}}$ being close to $1$ for all $f$ is sufficient to prove the claim in \Cref{theorem:main}. 
(iii)~Finally we show that in fact $\opsnorm{\sqrtop{\mc K}-\sqrtop{\mc K'}}$ being small is sufficient for $\hfrac{\snorm{\sqrtop{\mc P}_qf}}{\snorm{\sqrtop{\mc P}_{s,q}f}}$ to be close to $1$.
}

We first start by showing that $X_s$, just like $X_n$, is an i.i.d.~sample from $\rho,$ whereby we can apply the same concentration arguments for both.
\begin{lemma}
Let $x_1, \dotsc, x_n$ be i.i.d.~random variables drawn from $\rho$, a probability distirbution over measurable space $(\Real^d, \beta)$.
Let $i_1, \dotsc, i_s \in [n]$ be random according to some joint probability distribution such that $i_1, \dotsc, i_s$ are all distinct almost surely, and such that $i_1, \dotsc, i_s$ is independent of all $x_i$.
Then $x_{i_1}, \dotsc, x_{i_s}$ are i.i.d., each distributed as $\rho$.
\end{lemma}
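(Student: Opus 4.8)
The plan is to condition on the random index tuple $\mathbf{i} := (i_1, \dots, i_s)$ and exploit two elementary facts: (a) $\mathbf{i}$ is independent of $(x_1, \dots, x_n)$, so conditioning on an event $\{\mathbf{i} = \mathbf{j}\}$ does not alter the joint law of $(x_1, \dots, x_n)$; and (b) for any fixed tuple $\mathbf{j} = (j_1, \dots, j_s)$ with \emph{distinct} entries, the subfamily $(x_{j_1}, \dots, x_{j_s})$ is i.i.d.\ with law $\rho$, being a sub-collection of the i.i.d.\ family $(x_1,\dots,x_n)$. Since $\mathbf{i}$ takes values in the finite set $[n]^s$, there are no measure-theoretic subtleties, and it suffices to show that the joint law of $(x_{i_1}, \dots, x_{i_s})$ is the product measure $\rho^{\otimes s}$.

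Concretely, I would fix measurable sets $A_1, \dots, A_s \subseteq \Real^d$ and apply the law of total probability over the values $\mathbf{j}$ of $\mathbf{i}$:
\[
\Pr\bigl[x_{i_1}\in A_1,\, \dots,\, x_{i_s}\in A_s\bigr]
= \sum_{\mathbf{j} \in [n]^s} \Pr[\mathbf{i} = \mathbf{j}]\,
\Pr\bigl[x_{j_1}\in A_1,\, \dots,\, x_{j_s}\in A_s \,\big|\, \mathbf{i}=\mathbf{j}\bigr].
\]
Only tuples $\mathbf{j}$ with pairwise distinct entries contribute, since by the almost-sure distinctness hypothesis $\Pr[\mathbf{i}=\mathbf{j}]=0$ whenever $\mathbf{j}$ has a repeated coordinate (the event $\{\mathbf{i}=\mathbf{j}\}$ is then contained in the null event that $\mathbf{i}$ has a repeated coordinate). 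For such $\mathbf{j}$, independence of $\mathbf{i}$ and $(x_1,\dots,x_n)$ removes the conditioning, and fact (b) gives $\Pr[x_{j_1}\in A_1, \dots, x_{j_s}\in A_s] = \prod_{k=1}^s \rho(A_k)$, a quantity not depending on $\mathbf{j}$. Pulling it out of the sum and using $\sum_{\mathbf{j}}\Pr[\mathbf{i}=\mathbf{j}]=1$ yields $\Pr[x_{i_1}\in A_1, \dots, x_{i_s}\in A_s] = \prod_{k=1}^s \rho(A_k)$. Thus the joint law of $(x_{i_1},\dots,x_{i_s})$ agrees with $\rho^{\otimes s}$ on the $\pi$-system of measurable rectangles, hence on all of the product $\sigma$-algebra, which is precisely the statement that $x_{i_1},\dots,x_{i_s}$ are i.i.d.\ with common law $\rho$.

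I do not expect any genuinely hard step; the only points requiring a little care are the bookkeeping around distinctness — confirming that repeated-index tuples carry zero probability, and noting that fact (b) genuinely needs distinctness (otherwise $j_1=j_2$ would force $x_{j_1}=x_{j_2}$ and break independence) — together with the routine passage from rectangles to the full $\sigma$-algebra, which is the standard uniqueness lemma for probability measures. A slightly slicker alternative that avoids the $\pi$-system step is to run the same conditioning argument on expectations $\operatorname{E}\!\left[\prod_{k=1}^s g_k(x_{i_k})\right]$ for bounded measurable $g_k$, showing this factors as $\prod_{k=1}^s \operatorname{E}[g_k(x_1)]$; I would present whichever is more economical in context.
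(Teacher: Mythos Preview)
Your proposal is correct and follows essentially the same approach as the paper: condition on the index tuple, use independence of $\mathbf{i}$ and $(x_1,\dots,x_n)$ together with almost-sure distinctness to reduce to the i.i.d.\ sub-collection, and conclude that the joint law is $\rho^{\otimes s}$. The only cosmetic difference is that the paper works directly with an arbitrary measurable set $A\subseteq(\Real^d)^s$ and the tower property $\Prob(\cdot)=\Exp[\Prob(\cdot\mid i_1,\dots,i_s)]$, which sidesteps your rectangle--then--$\pi$-system extension, but the underlying idea is identical.
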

\begin{proof}
Let $\rho^s$ denote the product measure on $(\Real^{d})^s$. Let $A \in (\Real^{d})^s$ and measurable. 
Conditioning on $i_1, \dotsc, i_s$:
\begin{align*}
&\Prob\bigl((x_{i_1}, \dotsc, x_{i_s}) \in A\bigr) \\
&= \Exp \bigl(\Prob((x_{i_1}, \dotsc, x_{i_s}) \in A \mid i_1, \dotsc, i_s) \bigr)\\ 
&\overset{\rm (a)}= \Exp \bigl( \rho^s(A) \bigr) = \rho^s(A),
\end{align*}
where (a) holds because $i_1, \dotsc, i_s$ are distinct and independent of $x_1, \dotsc, x_n$. 
\end{proof}

We can now apply a concentration inequality to bound error between $\mc K$ and $\mc T$, and between $\mc K'$ and $\mc T.$

\begin{corollary}[{of \cite[Theorem 7]{rosasco2010learning}}]\label{cor:rosasco}\ 
    With probability at least $1-\frac\delta2$, with respect to the randomness of $X_n$, we have
    \begin{align}\label{eq:mck_perturb}
    {\opnorm{\mcK - \mcT}\leq 2\beta(K)\sqrt{\frac2n\log\round{\tfrac4\delta}}}.
\end{align}    
    Similarly, with probability at least $1-\frac\delta2$, with respect to the randomness of $X_s$, we have
\begin{align}\label{eq:mck'_perturb}    
\opnorm{\mcK' - \mcT}\leq 2\beta(K)\sqrt{\frac2s\log\round{\tfrac4\delta}}.
\end{align}
\end{corollary}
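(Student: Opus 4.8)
The plan is to reduce the statement to a single Hilbert-space concentration inequality, applied twice. Recall that $\mcK = \frac1n\sum_{i=1}^n K(x_i,\cdot)\otimes K(x_i,\cdot)$ is an empirical average of i.i.d.\ random operators whose common expectation is exactly $\mcT$, and that each summand is a positive semidefinite rank-one (hence Hilbert--Schmidt) operator with $\hsnorm{K(x,\cdot)\otimes K(x,\cdot)} = \snorm{K(x,\cdot)}^2 = K(x,x) \leq \beta(K)$, which is finite because $K$ is bounded. Thus $\mcK - \mcT$ is an average of centered i.i.d.\ Hilbert--Schmidt-valued random variables, each bounded in Hilbert--Schmidt norm by $2\beta(K)$, and $\mcT$ is Hilbert--Schmidt with $\hsnorm{\mcT}$ controlled by $\beta(K)$ as well.

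First I would invoke Theorem~7 of \cite{rosasco2010learning}, which is precisely a Hoeffding/Bernstein-type tail bound for such an average in the Hilbert--Schmidt norm: for $\tau>0$, with probability at least $1 - 2e^{-\tau}$ one has $\hsnorm{\mcK - \mcT} \leq \tfrac{2\sqrt{2}\,\beta(K)\sqrt{\tau}}{\sqrt{n}}$. Choosing $\tau = \log(4/\delta)$ makes the failure probability equal to $\delta/2$, and since $\opnorm{\cdot} \leq \hsnorm{\cdot}$ we obtain $\opnorm{\mcK - \mcT} \leq 2\beta(K)\sqrt{\tfrac{2}{n}\log(4/\delta)}$, which is \eqref{eq:mck_perturb}. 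For the second bound I would use the lemma proved just above: since the Nyström indices are distinct almost surely and independent of $X_n$, the subsample $X_s=\curly{x_{i_k}}_{k=1}^s$ is itself an i.i.d.\ sample from $\rho$. Hence $\mcK'$ is an empirical average of $s$ i.i.d.\ copies of the same rank-one operator with the same expectation $\mcT$, and the identical argument — Theorem~7 of \cite{rosasco2010learning} with $n$ replaced by $s$ and again $\tau=\log(4/\delta)$ — yields \eqref{eq:mck'_perturb} with probability at least $1-\delta/2$ over the randomness of $X_s$.

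The only things to verify are bookkeeping matters: that $\beta(K)<\infty$ under the stated assumptions (immediate from boundedness of $K$), that the hypotheses of the cited theorem (i.i.d.\ sampling from $\rho$, bounded kernel) hold in both applications, and that splitting the confidence budget as $\delta/2 + \delta/2$ is consistent so that a subsequent union bound over the two events costs only $\delta$ in total. There is essentially no obstacle here: all the analytic content lives in the concentration theorem of \cite{rosasco2010learning}, and the corollary is a restatement in our notation together with the explicit choice of deviation parameter $\tau=\log(4/\delta)$.
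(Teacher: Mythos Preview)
Your proposal is correct and matches the paper's argument essentially verbatim: invoke Theorem~7 of \cite{rosasco2010learning} to get the Hilbert--Schmidt bound, pass to the operator norm via $\opnorm{\cdot}\leq\hsnorm{\cdot}$, and substitute $\tau=\log(4/\delta)$; the second inequality follows by the same reasoning applied to the i.i.d.\ subsample $X_s$ (using the preceding lemma). There is nothing to add.
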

The result in \cite[Theorem 7]{rosasco2010learning}  was stated in terms of $\hsnorm{\cdot}$. The above corollary follows by using the fact that $\opnorm{\cdot}\leq\hsnorm{\cdot}.$ Additionally, we substitute their $\tau$ for $\log\round{\tfrac4\delta}$ for the sake of readability.

\begin{proposition}\label{propos:operator_disturb}
    With probability at least $1-\delta$ (with respect to the randomness in $X_s$ and $X_n$),
    \begin{align*}
    \opnorm{\sqrtop{\mcK} - \sqrtop{\mcK'}}
    \leq 2\sqrt{\beta(K)\sqrt{\frac2s\log\round{\tfrac{4}{\delta}}}}.
    \end{align*}
\end{proposition}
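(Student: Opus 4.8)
The plan is to reduce the claim to the two concentration inequalities already recorded in \Cref{cor:rosasco}, together with one operator-analytic fact about square roots. Concretely, I would first control $\opnorm{\mcK-\mcK'}$ by triangulating through $\mcT$, and then ``take square roots'' at the level of operator norms.

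For the first step, work on the intersection of the two events appearing in \Cref{cor:rosasco}; since each has probability at least $1-\tfrac\delta2$, a union bound gives probability at least $1-\delta$, and on this event
\[
\opnorm{\mcK-\mcK'}\;\le\;\opnorm{\mcK-\mcT}+\opnorm{\mcT-\mcK'}\;\le\;2\beta(K)\round{\sqrt{\tfrac2n}+\sqrt{\tfrac2s}}\sqrt{\log\round{\tfrac4\delta}}\;\le\;4\beta(K)\sqrt{\tfrac2s\log\round{\tfrac4\delta}},
\]
where the last inequality uses $s\le n$. For the second step I would invoke the operator-norm H\"older-type bound for the square root: for self-adjoint positive semidefinite (here finite-rank) operators $\mcA,\mcB$ on a Hilbert space, $\opnorm{\sqrtop{\mcA}-\sqrtop{\mcB}}\le\opnorm{\mcA-\mcB}^{1/2}$. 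Applying this with $\mcA=\mcK$, $\mcB=\mcK'$ and chaining with the bound above yields, on the same probability-$(1-\delta)$ event,
\[
\opnorm{\sqrtop{\mcK}-\sqrtop{\mcK'}}\;\le\;\sqrt{\opnorm{\mcK-\mcK'}}\;\le\;\sqrt{4\beta(K)\sqrt{\tfrac2s\log\round{\tfrac4\delta}}}\;=\;2\sqrt{\beta(K)\sqrt{\tfrac2s\log\round{\tfrac4\delta}}},
\]
which is exactly the asserted bound.

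The only nonroutine ingredient — and the step I would be most careful about — is the square-root inequality $\opnorm{\sqrtop{\mcA}-\sqrtop{\mcB}}\le\opnorm{\mcA-\mcB}^{1/2}$. This is a standard fact (the operator-norm analogue of the Powers--Størmer inequality; equivalently, $t\mapsto\sqrt t$ is operator H\"older continuous with exponent $\tfrac12$, and more generally $\opnorm{\mcA^r-\mcB^r}\le\opnorm{\mcA-\mcB}^r$ for $r\in(0,1]$). I would either cite it from a standard reference on operator monotone/operator concave functions or give a short self-contained proof valid for the finite-rank positive operators at hand — for instance via the integral representation $\sqrt t=\tfrac1\pi\int_0^\infty\tfrac{t}{t+\lambda}\,\tfrac{\dif\lambda}{\sqrt\lambda}$ combined with the resolvent identity, which reduces the estimate to bounding $\opnorm{(\mcA+\lambda\mcI)^{-1}-(\mcB+\lambda\mcI)^{-1}}$. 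Everything else — the union bound, the triangle inequality through $\mcT$, and the use of $s\le n$ — is elementary.
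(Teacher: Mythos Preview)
Your proposal is correct and matches the paper's proof essentially step for step: the paper too first invokes the square-root inequality $\opnorm{\sqrtop{\mcK}-\sqrtop{\mcK'}}\le\sqrt{\opnorm{\mcK-\mcK'}}$ (citing Bhatia's \emph{Matrix Analysis}, Theorem~X.1), then bounds $\opnorm{\mcK-\mcK'}$ by triangulating through $\mcT$, applying \Cref{cor:rosasco} with a union bound, and using $s\le n$. The only cosmetic difference is the order in which the two steps are presented.
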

\begin{proof}
Observe that we have the bound,
\begin{equation}\label{eq:mckmck'}
    \opnorm{\sqrtop{\mcK}-\sqrtop{\mcK'}}\leq \sqrt{\opnorm{\mc K-\mc K'}}
\end{equation}
following \cite[Theorem X.1, page 290]{bhatia2013matrix}. 
Next, by triangle inequality, we get that with probability $1-\delta$ (since we use a union bound on neither inequality failing, each with failure probability $\frac\delta2$),
\begin{align*}
    &\opnorm{\mcK - \mcK'} \leq \opnorm{\mcT - \mcK'} + \opnorm{\mcK - \mcT} 
    \\&\overset{\rm (a)}{\leq} 2\beta(K)\sqrt{2\log\round{\tfrac4\delta}}\round{\tfrac{1}{\sqrt{s}}+\tfrac{1}{\sqrt{n}}}
    \leq
    4\beta(K)\sqrt{\tfrac2s\log\round{\tfrac4\delta}}.
\end{align*}
where (a) is because of \Cref{cor:rosasco} and the last inequality holds since $s\leq n.$ 
\end{proof}


Recall definition of $\kappa$ in equation \eqref{def:kappa}.
\begin{proposition}\label{prop:ratio_to_condition_number}
Suppose there exists $\gamma>0$ such that for all $f\in\Hilbert_K$
we have
\begin{align}\label{eq:condition_multiplicative_bound}
  (1+\gamma)^{-1}\snorm{\sqrtop{\mc P}_q f}\leq \snorm{\sqrtop{\mc P}_{s,q}f}\leq (1+\gamma){\snorm{\sqrtop{\mc P}_q f}}.
\end{align}
Then
$
    \kappa(\sqrtop{\mc P_{s,q}} \mycirc\mcK\mycirc\sqrtop{\mc P_{s,q}}) \leq (1+\gamma)^4 \kappa({\mcP}_q \mycirc \mcK)
$.
\end{proposition}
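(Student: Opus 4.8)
The plan is to view the left-hand operator as a \emph{congruence} of $\mcP_q\mycirc\mcK$ and then reduce everything to the singular values of a well-conditioned operator. Put $\mc B := \sqrtop{\mcP_{s,q}}\mycirc\mcK\mycirc\sqrtop{\mcP_{s,q}}$ and $\mc C := \mcP_q\mycirc\mcK$. Because $\mcP_q$ is a spectral function of $\mcK$ (see \eqref{eq:preconditioner}), the two operators commute, so $\mc C$ is self-adjoint, positive semidefinite, of finite rank, and $\mc C = \sqrtop{\mcP_q}\mycirc\mcK\mycirc\sqrtop{\mcP_q}$; in particular $\kappa(\mc C) = \kappa(\mcP_q\mycirc\mcK)$. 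Both $\sqrtop{\mcP_q}$ and $\sqrtop{\mcP_{s,q}}$ are invertible (their eigenvalues are bounded away from $0$), so setting $N := \sqrtopinv{\mcP_q}\mycirc\sqrtop{\mcP_{s,q}}$ gives $\sqrtop{\mcP_{s,q}} = \sqrtop{\mcP_q}\mycirc N$ and hence
\begin{align*}
\mc B \;=\; N^{*}\mycirc\sqrtop{\mcP_q}\mycirc\mcK\mycirc\sqrtop{\mcP_q}\mycirc N \;=\; N^{*}\mycirc\mc C\mycirc N \;=\; M^{*}\mycirc M, \qquad M := \sqrtop{\mc C}\mycirc N .
\end{align*}
Thus the nonzero singular values of $\mc B$ are the squares of those of $M$, and it suffices to bound $\kappa(M)$.

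Next I would show that $N$ is well conditioned. Substituting $f\mapsto\sqrtopinv{\mcP_q}f$ into the hypothesis \eqref{eq:condition_multiplicative_bound} turns it into $(1+\gamma)^{-1}\snorm{u}\le\snorm{N^{*}u}\le(1+\gamma)\snorm{u}$ for all $u$, since $\sqrtop{\mcP_{s,q}}\mycirc\sqrtopinv{\mcP_q}$ is exactly the adjoint $N^{*}$. Hence all singular values of $N^{*}$, equivalently of $N$, lie in $[(1+\gamma)^{-1},1+\gamma]$, i.e.\ $\opnorm{N}\le 1+\gamma$ and $\opnorm{N^{-1}}\le 1+\gamma$.

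The rest is a routine singular-value calculation using submultiplicativity, $\sigma_i(AB)\le\min\{\opnorm{A}\,\sigma_i(B),\ \opnorm{B}\,\sigma_i(A)\}$: on one hand $\sigma_i(M)=\sigma_i(\sqrtop{\mc C}\mycirc N)\le\opnorm{N}\,\sigma_i(\sqrtop{\mc C})$, and on the other $\sigma_i(\sqrtop{\mc C})=\sigma_i(M\mycirc N^{-1})\le\opnorm{N^{-1}}\,\sigma_i(M)$. As $N$ is invertible, $M$ has the same rank $r$ as $\sqrtop{\mc C}$, so evaluating at $i=1$ and $i=r$ gives $\kappa(M)=\sigma_1(M)/\sigma_r(M)\le\opnorm{N}\,\opnorm{N^{-1}}\,\kappa(\sqrtop{\mc C})\le(1+\gamma)^2\kappa(\sqrtop{\mc C})$. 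Squaring and using $\mc B=M^{*}M$ and $\mc C=(\sqrtop{\mc C})^{2}$, we get $\kappa(\mc B)=\kappa(M)^2\le(1+\gamma)^4\kappa(\sqrtop{\mc C})^2=(1+\gamma)^4\kappa(\mc C)=(1+\gamma)^4\kappa(\mcP_q\mycirc\mcK)$, as claimed.

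The only delicate point is the conversion in the second step: the hypothesis is a two-sided bound on the \emph{norms} $\snorm{\sqrtop{\mcP_q}f}$ and $\snorm{\sqrtop{\mcP_{s,q}}f}$, and one must read off from it that $N$ has all singular values near $1$; the substitution $f\mapsto\sqrtopinv{\mcP_q}f$ does this because it makes the right-hand side of \eqref{eq:condition_multiplicative_bound} equal to $\snorm{f}$. A more hands-on alternative avoids singular-value inequalities and instead bounds $\sup_{\snorm f=1}\langle\mc B f,f\rangle$ and $\inf_{f\perp\nullspace(\mc B),\ \snorm f=1}\langle\mc B f,f\rangle$ via $\langle\mc B f,f\rangle=\langle\mc C Nf,Nf\rangle$; there the subtlety is that $\nullspace(\mc B)=N^{-1}\nullspace(\mc C)$ rather than $\nullspace(\mc C)$, so the lower bound needs $f\perp N^{-1}\nullspace(\mc C)$ combined with $\opnorm{N^{-1}}\le 1+\gamma$.
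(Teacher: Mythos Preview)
Your argument is correct. The congruence factorization $\mc B=N^{*}\mc C N=M^{*}M$ with $M=\sqrtop{\mc C}\,N$ is valid, the substitution $f\mapsto\sqrtopinv{\mcP_q}u$ indeed yields $(1+\gamma)^{-1}\le\snorm{N^{*}u}/\snorm{u}\le 1+\gamma$ for all $u$, and the singular-value submultiplicativity step is standard and correctly applied.

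The paper's own proof takes a shorter, more direct route. Rather than introducing the change-of-basis operator $N$, it factors $\mcK=\sqrtop{\mcK}\,\sqrtop{\mcK}$ and uses the identity $\kappa(\mcA\mcA^{*})=\kappa(\mcA)^{2}$ (with $\mcA=\sqrtop{\mcP_{s,q}}\,\sqrtop{\mcK}$) to reduce the claim to bounding $\kappa(\sqrtop{\mcP_{s,q}}\,\sqrtop{\mcK})$ in terms of $\kappa(\sqrtop{\mcP_q}\,\sqrtop{\mcK})$. Then it applies the hypothesis \emph{directly} with $\sqrtop{\mcK}f$ in place of $f$, together with the observation that $\nullspace(\sqrtop{\mcP_{s,q}}\,\sqrtop{\mcK})=\nullspace(\sqrtop{\mcP_q}\,\sqrtop{\mcK})$, to get the factor $(1+\gamma)^{2}$ in one stroke; squaring gives $(1+\gamma)^{4}$. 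Your approach trades this one-line substitution for a structural decomposition plus the Weyl-type inequality $\sigma_i(AB)\le\opnorm{A}\sigma_i(B)$; it is a bit more machinery but makes the origin of the exponent $4$ very explicit (two from $\opnorm{N}\opnorm{N^{-1}}$, then squared). Both proofs ultimately rest on the same $\kappa(\mcA\mcA^{*})=\kappa(\mcA)^{2}$ identity and the commutativity of $\mcP_q$ with $\mcK$.
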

\begin{proof}
We start by establishing 
\begin{equation}\label{eq:step1}
\kappa(\sqrtop{\mc P_{s,q}} \mycirc\mcK\mycirc\sqrtop{\mc P_{s,q}}) = \kappa(\sqrtop{{\mcP}_{s,q}} \mycirc \sqrtop{\mcK})^2.
\end{equation}
This follows from the following observation: 
For a finite rank operator $\mcA \in L(\Hilbert_K)$ we have 
\begin{equation}\label{eq:adjoint}
\kappa(\mcA \mycirc \mcA^*) = \kappa(\mcA)^2
\end{equation}
(which can be seen by expressing $\kappa(\cdot)$ in terms of singular values as $\kappa(\mcA) = \sigma_1(\mcA)/\sigma_{\operatorname{rank}(\mcA)}(\mcA)$ and standard matrix arguments, see discussion after \cref{def:kappa}).
Then set $\mcA = \sqrtop{\mc P_{s,q}} \mycirc \sqrtop{\mcK}$ in \eqref{eq:adjoint} to get \eqref{eq:step1}.

A direct application of \eqref{eq:condition_multiplicative_bound} and the observation that $\nullspace(\sqrtop{{\mcP}_{s,q}} \mycirc \sqrtop{\mcK}) = \nullspace(\sqrtop{{\mcP}_{q}} \mycirc \sqrtop{\mcK})$ (using the fact that $\sqrtop{{\mcP}_{s,q}}$ and  $\sqrtop{{\mcP}_{q}}$ are invertible) in the definition of $\kappa(\cdot)$ (\cref{def:kappa}) gives 
\begin{equation}\label{eq:step2}
\kappa(\sqrtop{{\mcP}_{s,q}} \mycirc \sqrtop{\mcK}) \leq (1+\gamma)^2\kappa(\sqrtop{{\mcP}_{q}} \mycirc \sqrtop{\mcK}).
\end{equation}

To conclude, notice that $\sqrtop{{\mcP}_{q}}$ and $\sqrtop{\mcK}$ commute. 
Thus, $\kappa({\mcP}_{q} \mycirc \mcK) = \kappa(\sqrtop{{\mcP}_{q}} \mycirc \sqrtop{{\mcP}_{q}} \mycirc \sqrtop{\mcK}\mycirc \sqrtop{\mcK}) = \kappa(\sqrtop{{\mcP}_{q}} \mycirc \sqrtop{\mcK}\mycirc \sqrtop{\mcK} \mycirc \sqrtop{{\mcP}_{q}}) =   \kappa(\sqrtop{{\mcP}_{q}} \mycirc \sqrtop{\mcK})^2  $ (using \eqref{eq:adjoint} again). The claim follows from this and \cref{eq:step1,eq:step2}.
\end{proof}

We are now ready to complete the proof of \Cref{theorem:main}.

\begin{proof}(of \Cref{theorem:main}).
In \Cref{sec:operator} we provide a general framework that bounds $\hfrac{\snorm{\sqrtop{\mc P}_qf}}{\snorm{\sqrtop{\mc P}_{s,q}f}}$ close to $1$, via a bound on $\opnorm{\mc K-\mc K'}$. 
In the special case, in \cref{proposition:operator_norm_epsilon} if we set $\mc V$ as $\sqrtop{\mc K}$, and $\mc V'$ as $\sqrtop{\mc K'}$, and the observation that $\sqrtop{\mc P}_q=\sqrt{\lambda_q}h_{\sqrt{\lambda_q}}(\sqrtop{\mc K})^{-1}$,  and $\sqrtop{\mc P}_{s,q}=\sqrt{\lambda_q'}h_{\sqrt{\lambda_q'}}(\sqrtop{\mc K'})^{-1}$, we get the following,
\begin{align}\label{eq:def:zeta_bound}
    (1+\zeta)^{-1} \snorm{\sqrtop{\mcP}_q f} \leq \snorm{\sqrtop{\mcP}_{s,q} f} \leq \round{1+\zeta}\snorm{\sqrtop{\mcP}_qf},\\
\label{eq:def:zeta}
\zeta := 2C\log (n+1)\cdot \tfrac{\opsnorm{\sqrtop{\mcK}-\sqrtop{\mcK'}}}{\sqrt{\lambda_q}}\left(1+\tfrac{\opsnorm{\sqrtop{\mc K}}}{\sqrt{\lambda_q'}}\right)
\end{align}
for some universal constant $C$. Note that we used the fact that $C\log (\rk(\mcK) + \rk(\mcK')+1)\leq C\log (2n+1) \leq 2C\log (n+1)$.

Observe that \Cref{prop:ratio_to_condition_number} says if we can find a suitable upper bound for $\zeta$, we are done. Indeed, we will show that choosing $s$ as in equation \eqref{eq:s_bound} leads to $\zeta\leq \eps$.



First, we assert that for sufficiently large values of \(n\) and \(s\), we can eliminate the dependence on samples and relate everything to the operator \(\mathcal{T}\). 
Note that by Weyl's inequality we have $\abs{\lambda_q -\lambda_q^*} \leq \opnorm{\mc K-\mc T}$ (and similarly for $\lambda_q'$).
Hence 
via equation
\eqref{eq:mck_perturb} we have,
\begin{align*}
    \lambda_q&\geq\lambda_q^*-\opnorm{\mc K-\mc T}\geq\lambda_q^*-2\beta(K)\sqrt{\tfrac2n\log\round{\tfrac4\delta}}.
\end{align*}
and similarly via equation \eqref{eq:mck'_perturb} we have,
$
\lambda_q'\geq\lambda_q^*-2\beta(K)\sqrt{\tfrac2s\log\round{\tfrac4\delta}}.
$
Using $n\geq s\geq c_1C_{K,q}^2\log\round{\tfrac4\delta}$, where $c_1:=32$ and $C_{K,q}=\beta(K)/\lambda_q^*$, we can write,  
\begin{equation}\label{eq:populationlambda}
    \frac{\lambda_q^*}{2}\leq \lambda_q'\leq\frac{3\lambda_q^*}{2},\quad\text{and}\quad \frac{\lambda_q^*}{2}\leq\lambda_q\leq\frac{3\lambda_q^*}{2}.
\end{equation}
which hold together with probability at least $1-\delta,$ (via a union bound on the two events in Corollary \ref{cor:rosasco}). 

We find an upper bound for $\zeta$ from \eqref{eq:def:zeta} as following,
\begin{align*}
    &\tfrac{\zeta}{2C\log(1+n)} 
    \overset{\rm (a)}{\leq} \sqrt{\frac{4\beta(K)}{\lambda_q}\sqrt{\frac2s\log\round{\tfrac{4}{\delta}}}}\left(1+\frac{\opsnorm{\sqrtop{\mc K}}}{\sqrt{\lambda_q'}}\right)\\
    &\overset{\rm (b)}{\leq} \sqrt{\frac{4\beta(K)}{\lambda_q}\sqrt{\frac2s\log\round{\tfrac{4}{\delta}}}}\left(1+\sqrt{\frac{\beta(K)}{\lambda_q'}}\right)\\
    &\overset{\rm (c)}{\leq} \sqrt{\frac{4\beta(K)}{\lambda_q}\sqrt{\frac2s\log\round{\tfrac{4}{\delta}}}}\left(1+\sqrt{2}\sqrt{\frac{\beta(K)}{\lambda_q^*}}\right)\\
    &\overset{\rm (d)}{\leq} \sqrt{\frac{4\beta(K)}{\lambda_q}\sqrt{\frac2s\log\round{\tfrac{4}{\delta}}}}\left(2\sqrt{\frac{\beta(K)}{\lambda_q^*}}\right)\\    
    &\overset{\rm (e)}\leq
    \frac{4\sqrt2\beta(K)}{\lambda_q^*}\left(\frac{{2\log\round{\tfrac{4}{\delta}}}}{{s}}\right)^{\frac14}
    \overset{\rm (f)}{\leq}\frac{\eps}{2C\log(1+n)}
\end{align*}
where (a) is due to \Cref{propos:operator_disturb}, (c) and (e) apply inequalities \eqref{eq:populationlambda}, and (b) is justified by the following,
\begin{align*}
    \opsnorm{\mcK^{\frac12}} &= \opnorm{\mc K}^{\frac12}\leq \hsnorm{\mcK}^{\frac12}
\leq\sqrtop{\trace(\mcK)}\leq\sqrt{\beta(K)},
\end{align*}
where in we have used the monotonicity of the Schatten norm, and inequality \eqref{eq:trace_bound}. Similarly, inequality (d) holds because $\lambda_q^*\leq \trace(\mc T)\leq \beta(K)$, again following \eqref{eq:trace_bound}.
Finally, (f) holds because for $c_2:=2^{16}C^4$, we have assumed 
$s \geq c_2 C_{K,q}^4
\frac{ \log^4 n}{\eps^4}\log\left(\tfrac{4}{\delta}\right),$ whereby we can conclude $\zeta\leq \eps.$

In conclusion, we have established that with probability at least $1-\delta$, under the assumptions in the statement of Theorem \ref{theorem:main}, equation \eqref{eq:def:zeta_bound} holds. \Cref{prop:ratio_to_condition_number} (with $\gamma\gets\zeta$), along with $\zeta\leq\eps$, proves the claim.
\end{proof}

\mysection{From Additive to Multiplicative Approximation}\label{sec:operator}
In this section, we show that when two finite rank operators $\mcV$ and $\mcV'$ are close in the operator norm, their corresponding preconditioners are similarly close \emph{in a multiplicative sense}.
The results in this section do not use properties of RKHS and hold for any separable Hilbert space $\Hilbert$.

We will prove the following general lemma.

\begin{lemma}\label{proposition:operator_norm_epsilon}
Suppose $\mc V, \mcV' \in L(\Hilbert)$ are finite rank, self-adjoint positive semidefinite operators. Eigenvalues of $\mcV$ are ordered as $\nu_1 \geq \nu_2 \geq \dotsb$ and those of $\mcV'$ are $\nu_1' \geq \nu_2' \geq \dotsb$. 
Further assume that $q \in \NN$ is such that $\nu_{q}, \nu_{q}' > 0$.
Define $\mcC, \mcC' \in L(\Hilbert)$ given by $\mcC = \nu_{q} h_{\nu_{q}}(\mc V)^{-1}$ and $\mcC' = \nu_{q}' h_{\nu_{q}'}(\mcV')^{-1}$.
Then
\[
(\forall f\neq 0) \quad \frac{\norm{\mcC f}}{\norm{\mcC' f}} \in [(1+\eps')^{-1},1+\eps']
\]
where $\eps' 
= C \log \left(\rk(\mcV) + \rk(\mcV')+1\right) \frac{\opnorm{\mcV - \mcV'}}{\nu_{q}}(1 + \frac{\opnorm{\mcV'}}{\nu_{q}'})
$ for some universal constant $C$.
\end{lemma}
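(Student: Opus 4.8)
\textbf{Proof plan for Lemma~\ref{proposition:operator_norm_epsilon}.}

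The plan is to reduce the two-sided ratio bound to a single operator-norm estimate on $\mcC - \mcC'$ (or rather on $\mcC'\mcC^{-1} - \mcI$), since $\|\mcC f\|/\|\mcC' f\| \in [(1+\eps')^{-1}, 1+\eps']$ for all $f \neq 0$ is equivalent to $\|\mcC'\mcC^{-1}\| \leq 1+\eps'$ together with $\|\mcC(\mcC')^{-1}\| \leq 1+\eps'$ (both $\mcC$ and $\mcC'$ are invertible since $h_{\nu_q}$ and $h_{\nu_q'}$ are bounded below by $\nu_q, \nu_q' > 0$, so $\mcC = \nu_q h_{\nu_q}(\mcV)^{-1}$ has spectrum in $(0,1]$). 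So it suffices to show $\|\mcC'\mcC^{-1} - \mcI\| \leq \eps'$ and symmetrically; writing $\mcC'\mcC^{-1} - \mcI = (\mcC' - \mcC)\mcC^{-1}$ and noting $\|\mcC^{-1}\| = \|h_{\nu_q}(\mcV)\|/\nu_q \leq (\|\mcV\| \vee \nu_q)/\nu_q = \max\{\|\mcV\|/\nu_q, 1\}$, the whole problem comes down to bounding $\|\mcC - \mcC'\| = \|\nu_q h_{\nu_q}(\mcV)^{-1} - \nu_q' h_{\nu_q'}(\mcV')^{-1}\|$.

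The key step is therefore a perturbation bound for the map $\mcV \mapsto h_{\nu_q}(\mcV)^{-1}$ (scaled by $\nu_q$), handling simultaneously the perturbation of the operator $\mcV \to \mcV'$ and of the threshold $\nu_q \to \nu_q'$. I would split this via the triangle inequality into (i) changing the operator at fixed threshold, $\|\nu_q h_{\nu_q}(\mcV)^{-1} - \nu_q h_{\nu_q}(\mcV')^{-1}\|$, and (ii) changing the threshold at fixed operator, $\|\nu_q h_{\nu_q}(\mcV')^{-1} - \nu_q' h_{\nu_q'}(\mcV')^{-1}\|$. For (ii), since $\mcV'$ is fixed and $h_\alpha$ depends continuously (indeed Lipschitzly, uniformly) on $\alpha$, and since $|\nu_q - \nu_q'| \leq \|\mcV - \mcV'\|$ by Weyl's inequality, this term is controlled by $\|\mcV - \mcV'\|/\nu_q$ up to constants. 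For (i), the function $x \mapsto 1/h_{\nu_q}(x) = 1/\max\{x,\nu_q\}$ is Lipschitz with constant $1/\nu_q^2$, but the operator-function map $\mcV \mapsto g(\mcV)$ for a Lipschitz scalar $g$ is \emph{not} in general Lipschitz in operator norm — this is the classical failure of operator Lipschitzness — and the standard fix (due to Farforovskaya / Kato and refined by others) is that one pays a logarithmic factor in the rank (or in $\log\|\mcV\|/\|\mcV-\mcV'\|$). This is exactly where the $\log(\rk(\mcV) + \rk(\mcV') + 1)$ factor in $\eps'$ enters: I would invoke a quantitative operator-Lipschitz estimate of the form $\|g(\mcV) - g(\mcV')\| \leq C \log(\rk + 1)\, \|g\|_{\mathrm{Lip}}\, \|\mcV - \mcV'\|$ valid for self-adjoint finite-rank operators, with $g(x) = \nu_q/\max\{x,\nu_q\}$ so that $\|g\|_{\mathrm{Lip}} = 1/\nu_q$, giving a contribution $\lesssim \log(\rk+1)\,\|\mcV-\mcV'\|/\nu_q$.

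Assembling: term (i) contributes $\lesssim \log(\rk(\mcV)+\rk(\mcV')+1)\,\|\mcV - \mcV'\|/\nu_q$, term (ii) contributes $\lesssim \|\mcV-\mcV'\|/\nu_q$, so $\|\mcC - \mcC'\| \lesssim \log(\rk(\mcV)+\rk(\mcV')+1)\,\|\mcV-\mcV'\|/\nu_q$; multiplying by $\|\mcC^{-1}\| \leq \max\{\|\mcV\|/\nu_q, 1\} \leq 1 + \|\mcV\|/\nu_q$ and then replacing $\nu_q$ in this last factor by $\nu_q'$ (using $\nu_q \geq \nu_q' - \|\mcV-\mcV'\|$, absorbing the discrepancy into constants, or by symmetrizing the argument) yields the claimed $\eps' = C\log(\rk(\mcV)+\rk(\mcV')+1)\frac{\|\mcV-\mcV'\|}{\nu_q}(1 + \frac{\|\mcV'\|}{\nu_q'})$. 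The main obstacle is step (i): pinning down the correct quantitative operator-Lipschitz inequality with the logarithmic-in-rank constant (rather than a dimension-free Lipschitz bound, which is false) and applying it cleanly to the thresholded-reciprocal function, making sure the Lipschitz constant and the domain of the function are tracked honestly. Everything else is routine triangle-inequality and spectral-calculus bookkeeping, plus Weyl's inequality for the eigenvalue perturbation $|\nu_q - \nu_q'| \leq \|\mcV - \mcV'\|$.
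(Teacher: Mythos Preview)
Your plan is sound and would go through, but the paper takes a slightly different and cleaner route that sidesteps the two places you flag as needing care.

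The key divergence is that the paper bounds $\opnorm{\mcC^{-1} - \mcC'^{-1}}$ rather than $\opnorm{\mcC - \mcC'}$. Since $\mcC^{-1} = \tfrac{1}{\nu_q}h_{\nu_q}(\mcV)$ has all eigenvalues $\geq 1$, \cref{lemma:rationbound} applies with $\lambdamin = 1$, so the multiplicative constant is directly $1 + \opnorm{\mcC^{-1} - \mcC'^{-1}}$: there is no factor of $\opnorm{\mcC^{-1}}$ to carry and no need to massage $\opnorm{\mcV}/\nu_q$ into $\opnorm{\mcV'}/\nu_q'$ after the fact. The factor $(1 + \opnorm{\mcV'}/\nu_q')$ in $\eps'$ instead emerges from the triangle-inequality splitting of $\opnorm{\tfrac{1}{\nu_q}\mcV - \tfrac{1}{\nu_q'}\mcV'}$, combined with Weyl's inequality for $|\nu_q - \nu_q'|$ just as you propose. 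The passage from the ratio bound for $\mcC^{-1}, \mcC'^{-1}$ back to $\mcC, \mcC'$ is a one-line invariance (\cref{lem:equivalences}).

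The second payoff of inverting concerns your step (i). Using $h_\alpha(x) = h_0(x - \alpha) + \alpha$ and the positive homogeneity of $h_0$, one gets $\mcC^{-1} - \mcC'^{-1} = h_0\bigl((\tfrac{1}{\nu_q}\mcV - \mcI)|_V\bigr) - h_0\bigl((\tfrac{1}{\nu_q'}\mcV' - \mcI)|_V\bigr)$ on the finite-dimensional subspace $V = \image(\mcV) + \image(\mcV')$, so the only operator-Lipschitz estimate needed is for the positive-part function $h_0(x) = (x + |x|)/2$. This reduces to Kato's classical $O(\log k)$ bound for the absolute value (\cref{cor:bhatia}). Your route instead requires operator-Lipschitzness of $g(x) = \nu_q/\max\{x, \nu_q\}$; this is true, but it appeals to a Farforovskaya-type statement for \emph{arbitrary} Lipschitz scalar functions, which depending on the reference may carry a $(\log k)^2$ rather than $\log k$ constant and is less directly citable. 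In short: both arguments work, but inverting first lets the paper invoke only the absolute-value case and eliminates the bookkeeping you anticipated.
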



The first step, \cref{lemma:rationbound}, states that it suffices to find an upper bound for $\opnorm{\mcV-\mcV'}$ and a lower bound for the eigenvalues of $\mcC$ and $\mcC'$. 

\begin{lemma}\label{lemma:rationbound}
Let $\mathcal{A}, \mathcal{B} \in L(\Hilbert)$ be such that for all $f \in \Hilbert$, $\norm{\mcA f} \geq \lambdamin \norm{f}$ and $\norm{\mcB f} \geq \lambdamin \norm{f}$. 
Additionally, assume $\opnorm{\mathcal{A}-\mathcal{B}} \leq \eps$. 
Then for all $f \in \Hilbert$ we have
$(1+\frac{\eps}{\lambdamin})^{-1}\norm{\mcB f } \leq \norm{\mcA f} \leq (1+\frac{\eps}{\lambdamin})\norm{\mcB f }$.
\end{lemma}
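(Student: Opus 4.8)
The plan is to derive the multiplicative bound directly from a triangle-inequality estimate, using the uniform lower bounds on $\mcA$ and $\mcB$ to trade the additive perturbation $\opnorm{\mcA-\mcB}\le\eps$ for a multiplicative one. Fix $f\in\Hilbert$; the case $f=0$ is trivial, so assume $f\neq 0$ (note that the lower bound hypotheses force $\lambdamin>0$ for the statement to be non-vacuous, and then $\norm{\mcA f},\norm{\mcB f}>0$). First I would write $\norm{\mcA f}\le\norm{\mcB f}+\norm{(\mcA-\mcB)f}\le\norm{\mcB f}+\eps\norm{f}$, and then substitute the bound $\norm{f}\le\lambdamin^{-1}\norm{\mcB f}$, which comes from $\norm{\mcB f}\ge\lambdamin\norm{f}$. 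This yields $\norm{\mcA f}\le(1+\tfrac{\eps}{\lambdamin})\norm{\mcB f}$, i.e.\ the upper bound in the claim.

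For the lower bound, the point to be careful about is that the naive estimate $\norm{\mcA f}\ge\norm{\mcB f}-\eps\norm{f}\ge(1-\tfrac{\eps}{\lambdamin})\norm{\mcB f}$ is \emph{not} strong enough: since $(1-x)(1+x)=1-x^2\le 1$, one has $1-\tfrac{\eps}{\lambdamin}\le(1+\tfrac{\eps}{\lambdamin})^{-1}$, so this would give a weaker constant (and is useless once $\eps\ge\lambdamin$). Instead I would observe that the hypotheses are symmetric in $\mcA$ and $\mcB$, so the argument of the previous paragraph applied with the roles of $\mcA$ and $\mcB$ swapped gives $\norm{\mcB f}\le(1+\tfrac{\eps}{\lambdamin})\norm{\mcA f}$; dividing through by $1+\tfrac{\eps}{\lambdamin}$ produces $\norm{\mcA f}\ge(1+\tfrac{\eps}{\lambdamin})^{-1}\norm{\mcB f}$, completing the proof.

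There is essentially no real obstacle in this lemma — it is a two-line computation — and no structural assumptions on $\mcA,\mcB$ (self-adjointness, positivity, etc.) are needed beyond those stated. The only thing worth flagging is the symmetry trick just described, which is what lets us get the clean reciprocal constant $(1+\tfrac{\eps}{\lambdamin})^{-1}$ rather than the cruder $1-\tfrac{\eps}{\lambdamin}$. When this lemma is later invoked inside \cref{proposition:operator_norm_epsilon} with $\mcA=\mcC$, $\mcB=\mcC'$, the quantities $\lambdamin$ and $\eps$ will be supplied by lower bounds on the spectra of $\mcC,\mcC'$ and by an operator-norm bound on $\mcC-\mcC'$ respectively, so the present statement is deliberately kept abstract.
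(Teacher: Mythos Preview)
Your proof is correct and matches the paper's approach essentially line for line: the paper also proves the upper bound via the triangle inequality $\norm{\mcA f}\le\norm{\mcB f}+\eps\norm{f}\le(1+\tfrac{\eps}{\lambdamin})\norm{\mcB f}$ and then invokes the symmetry of the hypotheses in $\mcA,\mcB$ for the lower bound. One small aside: in the paper the lemma is ultimately applied to $\mcC^{-1}$ and $\mcC'^{-1}$ (with $\lambdamin=1$) rather than to $\mcC,\mcC'$ directly, but that does not affect your argument here.
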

\begin{proof}
By symmetry it is enough to prove the right hand side inequality:  $\norm{\mcA f} = \norm{\mcA f - \mcB f + \mcB f} \leq \norm{\mcB f} + \eps \norm{f} \leq \norm{\mcB f} (1+\frac{\eps}{\lambdamin})$.
\end{proof}

The next step, \cref{lem:equivalences}, is the observation that the quality of the multiplicative approximation in \cref{lemma:rationbound} is invariant under taking inverses of the operators.
It turns out that it will be easier to bound $\opnorm{\mcC^{-1} - \mcC'^{-1}}$ and apply \cref{lemma:rationbound} to $\mcC^{-1}$ and $\mcC'^{-1}$.
As a bonus, in that case we can take $\lambdamin = 1$.


\begin{lemma}\label{lem:equivalences}
Let $\mcA, \mcB \in L(\Hilbert)$ be self-adjoint and invertible.
Let $c \geq 1$.
The following statements are equivalent:
\begin{enumerate}
\item $(\forall f\neq 0) \frac{\norm{\mcA f}}{\norm{\mcB f}} \in [c^{-1},c]$.
\item $\opnorm{\mcA\mycirc \mcB^{-1}}, \opnorm{\mcB\mycirc \mcA^{-1}}  \in [c^{-1},c]$.
\item $\opnorm{\mcB^{-1}\mycirc \mcA}, \opnorm{\mcA^{-1}\mycirc \mcB}  \in [c^{-1},c]$.
\item $(\forall f\neq 0) \frac{\norm{\mcA^{-1} f}}{\norm{\mcB^{-1} f}} \in [c^{-1},c]$.
\end{enumerate}
\end{lemma}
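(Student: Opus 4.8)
The plan is to prove the cyclic chain of equivalences $1 \Rightarrow 2 \Rightarrow 3 \Rightarrow 4 \Rightarrow 1$, exploiting self-adjointness at the two places where it matters. First I would record the elementary observation that for any invertible $\mcA, \mcB \in L(\Hilbert)$, the condition ``$(\forall f \neq 0)\ \norm{\mcA f}/\norm{\mcB f} \leq c$'' is equivalent to ``$\opnorm{\mcA \mycirc \mcB^{-1}} \leq c$'': substituting $g = \mcB^{-1} f$ turns $\sup_{f \neq 0} \norm{\mcA f}/\norm{\mcB f}$ into $\sup_{g \neq 0} \norm{\mcA \mcB^{-1} g}/\norm{g} = \opnorm{\mcA \mycirc \mcB^{-1}}$. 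Applying this with the roles of $\mcA$ and $\mcB$ swapped handles the lower bound $\norm{\mcA f}/\norm{\mcB f} \geq c^{-1}$, which reads $\opnorm{\mcB \mycirc \mcA^{-1}} \leq c$. This gives $1 \Leftrightarrow 2$ directly, with no need for self-adjointness.

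Next, $2 \Leftrightarrow 3$ is where self-adjointness enters: for any bounded operator $\mcM$ one has $\opnorm{\mcM} = \opnorm{\mcM^*}$, and $(\mcA \mycirc \mcB^{-1})^* = (\mcB^{-1})^* \mycirc \mcA^* = \mcB^{-1} \mycirc \mcA$ since $\mcA, \mcB$ (hence $\mcB^{-1}$) are self-adjoint. So $\opnorm{\mcA \mycirc \mcB^{-1}} = \opnorm{\mcB^{-1} \mycirc \mcA}$ and likewise $\opnorm{\mcB \mycirc \mcA^{-1}} = \opnorm{\mcA^{-1} \mycirc \mcB}$; thus the pair of norms in statement 2 equals the pair in statement 3, and the membership in $[c^{-1}, c]$ transfers verbatim. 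Finally $3 \Leftrightarrow 4$ is just statement $1 \Leftrightarrow 2$ re-applied to the operators $\mcA^{-1}$ and $\mcB^{-1}$ (which are again self-adjoint and invertible), noting that $(\mcA^{-1})^{-1} = \mcA$: statement 4 says $(\forall f \neq 0)\ \norm{\mcA^{-1} f}/\norm{\mcB^{-1} f} \in [c^{-1}, c]$, which by the basic observation is equivalent to $\opnorm{\mcA^{-1} \mycirc \mcB}, \opnorm{\mcB^{-1} \mycirc \mcA} \in [c^{-1},c]$, i.e.\ statement 3.

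I should be slightly careful about one bookkeeping point: the ``basic observation'' gives the upper bound $\sup_f \norm{\mcA f}/\norm{\mcB f} = \opnorm{\mcA \mycirc \mcB^{-1}}$ as an equality of suprema, and the lower bound $\inf_f \norm{\mcA f}/\norm{\mcB f} = 1/\opnorm{\mcB \mycirc \mcA^{-1}}$ similarly; so ``ratio in $[c^{-1},c]$ for all $f$'' is exactly ``$\opnorm{\mcA\mycirc\mcB^{-1}} \leq c$ and $\opnorm{\mcB\mycirc\mcA^{-1}} \leq c$''. One does \emph{not} automatically get $\opnorm{\mcA\mycirc\mcB^{-1}} \geq c^{-1}$ from this — but it holds anyway because $\opnorm{\mcA\mycirc\mcB^{-1}} \cdot \opnorm{\mcB\mycirc\mcA^{-1}} \geq \opnorm{(\mcA\mycirc\mcB^{-1})\mycirc(\mcB\mycirc\mcA^{-1})} = \opnorm{\mcI} = 1$ (assuming $\Hilbert \neq \{0\}$), so each of the two norms lies in $[c^{-1},c]$, matching the bracket notation used in the statement. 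I do not expect a serious obstacle here; the only thing to get right is being consistent about which direction of substitution produces which norm, and invoking $\opnorm{\mcM}=\opnorm{\mcM^*}$ together with self-adjointness precisely at the $2 \Leftrightarrow 3$ step. The finite-rank hypothesis from \cref{proposition:operator_norm_epsilon} plays no role in this lemma — only invertibility and self-adjointness do.
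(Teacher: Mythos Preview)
Your proposal is correct and follows essentially the same route as the paper: substitution $f = \mcB^{-1}g$ (resp.\ $f = \mcA^{-1}g$) for $1 \Leftrightarrow 2$, adjoint-invariance of the operator norm together with self-adjointness for $2 \Leftrightarrow 3$, and re-application of $1 \Leftrightarrow 2$ to the inverses for $3 \Leftrightarrow 4$. Your extra bookkeeping paragraph (using $\opnorm{\mcA\mycirc\mcB^{-1}}\cdot\opnorm{\mcB\mycirc\mcA^{-1}} \geq \opnorm{\mcI}=1$ to secure the lower bound $c^{-1}$ on each norm) makes explicit a point the paper leaves implicit, and is a nice touch.
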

\begin{proof}
($1 \Leftrightarrow 2$) 
In 1, use substitution $f = \mcB^{-1} (g)$ to get $(\forall g\neq 0) \frac{\norm{\mcA \mycirc\mcB^{-1} (g)}}{\norm{g}} \in [c^{-1},c]$.
Taking $\sup$ we get $\opnorm{\mcA \mycirc\mcB^{-1}} \in [c^{-1},c]$.
Similarly, in 1 use substitution $f = \mcA^{-1} g$ and take $\inf$ to get $\opnorm{\mcB^{-1} \mycirc\mcA} \in [c^{-1},c]$.
The same argument in reverse shows the equivalence.

($2 \Leftrightarrow 3$) 
Follows immediately from the fact that the operator norm is invariant under taking adjoint.

($3 \Leftrightarrow 4$)
This follows from $1 \Leftrightarrow 2$ with $\mcA^{-1}$ in the role of $\mcA$ and $\mcB^{-1}$ in the role of $\mcB$.
\end{proof}

\begin{proposition}\label{cor:bhatia}
For any two operators $\mc S, \mc T \in L(\Hilbert)$ where $\Hilbert$ is an $k$-dimensional Hilbert space we have
\[
\opnorm{h_0(\mc S) - h_0(\mc T)} \leq  C\log (k+1) \opnorm{\mc S -\mc T}.
\]
where $C>0$ is a universal constant.
\end{proposition}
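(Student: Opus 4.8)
The statement is that the function $h_0(x) = \max\{x,0\} = x_+$, applied as an operator function to self-adjoint operators on a $k$-dimensional Hilbert space, is Lipschitz in operator norm with constant $C\log(k+1)$. The plan is to recognize this as a standard fact about the operator Lipschitz behaviour of the absolute-value-type functions, since $h_0(x) = \tfrac12(x + |x|)$, so $h_0(\mc S) - h_0(\mc T) = \tfrac12\bigl((\mc S - \mc T) + (|\mc S| - |\mc T|)\bigr)$ and by the triangle inequality it suffices to bound $\opnorm{|\mc S| - |\mc T|} \le (2C-1)\opnorm{\mc S - \mc T}$ for a suitable constant. This reduces the claim to the known near-Lipschitz estimate for the absolute value function on $k\times k$ Hermitian matrices.

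For that estimate I would invoke the results in \cite{bhatia2013matrix}: the function $x \mapsto |x|$ is not operator Lipschitz, but on matrices of size $k$ one has $\opnorm{\,|\mc S| - |\mc T|\,} \le C' \log(k+1)\,\opnorm{\mc S - \mc T}$ for a universal constant $C'$. This is classical — it follows, for instance, from the fact that the divided-difference (Löwner) matrix of any Lipschitz function $f$ gives $\opnorm{f(\mc S)-f(\mc T)} \le \|f\|_{\mathrm{Lip}}\cdot(\text{a factor that is }O(\log k)\text{ in dimension }k)$, and the absolute value is $1$-Lipschitz. Equivalently one can cite the sharp constant results (e.g.\ the $\Theta(\log k)$ growth of the operator-Lipschitz constant of $|\cdot|$) from the matrix analysis literature that \cite{bhatia2013matrix} collects. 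Combining, $\opnorm{h_0(\mc S)-h_0(\mc T)} \le \tfrac12\opnorm{\mc S-\mc T} + \tfrac12 C'\log(k+1)\opnorm{\mc S-\mc T} \le C\log(k+1)\opnorm{\mc S-\mc T}$ with $C := \tfrac12(1 + C')$, using $\log(k+1) \ge \log 2 > 0$ to absorb the first term.

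The main obstacle — really the only nontrivial point — is pinning down the $\log(k+1)$ dimensional factor for $|\cdot|$ rather than a dimension-free bound (which is false) or a worse polynomial factor. I would handle this by citing the precise theorem in \cite{bhatia2013matrix} (the chapter on operator monotone and operator convex functions / perturbation of the absolute value), which is exactly the source already used elsewhere in the paper for the inequality $\opnorm{\sqrtop{\mc K}-\sqrtop{\mc K'}} \le \sqrt{\opnorm{\mc K - \mc K'}}$, so no new machinery is introduced. Everything else — the splitting $h_0(x)=\tfrac12(x+|x|)$, the triangle inequality, and absorbing constants — is routine.
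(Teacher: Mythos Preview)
Your proposal is correct and follows essentially the same route as the paper: the identity $h_0(x)=\tfrac12(x+|x|)$, the triangle inequality, and then the known $O(\log k)$ operator-Lipschitz bound for $x\mapsto|x|$ on $k\times k$ matrices. The only discrepancy is bibliographic: the paper cites \cite[Theorem 4.2]{MR2650102} for the absolute-value bound rather than \cite{bhatia2013matrix}, so you should check that the latter actually states the $\log k$ result in the form you need.
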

\begin{proof}
For an operator $\mc S$, denote by $|\mc S|$ the operator $\sqrtop{(\mc S^*\mc S)}$.
\cite[Theorem 4.2]{MR2650102} states that $ \opnorm{\abs{\mc S} - \abs{\mc T}} \leq O(\log k) \opnorm{\mc S-\mc T}\leq c\log (k+1) \opnorm{\mc S-\mc T}$ for some universal constant $c>0$.
Thus
we have $h_0(\mc S) = (\mc S+|\mc S|)/2$ whereby,
\begin{align*}
&\opnorm{h_0(\mc S) - h_0(\mc T)} 
= \opnorm{(\mc S+|\mc S|)/2 - (\mc T+|\mc T|)/2} \\
&\leq \frac{1}{2} (\opnorm{\mc S-\mc T} + \opnorm{|\mc S|-|\mc T|})\\
&\leq C \log (k+1)\opnorm{\mc S -\mc  T}.\qedhere
\end{align*}
\end{proof}
Instead of \cite[Theorem 4.2]{MR2650102}, it may be possible to obtain a result similar to \cref{cor:bhatia} by using \cite[result I]{10.3792/pja/1195519395}, which gives a bound where the dependence on $k$ is replaced by a dependence on the operator norms of $\mc S$ and $\mc T$.

\begin{proof}(of \Cref{proposition:operator_norm_epsilon})
%
%
Let $V = \image(\mcV) + \image(\mcV')$, i.e., $V$ is the subspaces generated by taking the linear combination of these two subspaces of $\Hilbert$.
Below, if $\mcA \in L(\Hilbert)$ is such that $\mcA|_V$ has image in $V$, then (by a slight abuse of notation) we consider $\mcA|_V$ as an element of $L(V)$ (even though in general $\mcA|_V:V \to \Hilbert$).
We have 
{
\renewcommand{\ophnorm}[1]{\norm{#1}_{\mathsf{OP}}}
\renewcommand{\opvnorm}[1]{\norm{#1}_{\mathsf{OP}}}
\begin{align*}
&\ophnorm{\mcC^{-1} - \mcC'^{-1}}
= \ophnorm{\rfrac{1}{\nu_{q}} h_{\nu_{q}}(\mcV) - \rfrac{1}{\nu_{q}'} h_{\nu_{q}'}(\mcV')} \\
&= \opvnorm{\rfrac{1}{\nu_{q}} h_{\nu_{q}}(\mcV|_{V}) - \rfrac{1}{\nu_{q}'} h_{\nu_{q}'}(\mcV'|_{V})} \\
&= \opvnorm{\rfrac{1}{\nu_{q}} h_{0}\bigl((\mcV - \nu_{q}\mcI)|_{V}\bigr) - \rfrac{1}{\nu_{q}'} h_{0}\bigl((\mcV' - \nu_{q}'\mcI)|_{V}\bigr)} \\
&= \opvnorm{ h_{0}\bigl((\rfrac{1}{\nu_{q}}\mcV - \mcI)|_{V}\bigr) - h_{0}\bigl((\rfrac{1}{\nu_{q}'} \mcV' - \mcI)|_{V}\bigr)} \\
&\overset{\rm (a)}{\leq}  C\log(\dim(V)+1)\opvnorm{ \rfrac{1}{\nu_{q}}\mcV - \rfrac{1}{\nu_{q}'} \mcV' } \\
%
&\overset{\rm (b)}{\leq} C\log(\dim(V)+1)\biggl( \ophnorm{ \frac{\mcV - \mcV'}{\nu_{q}}}+ \ophnorm{\frac{\nu_{q}' - \nu_{q}}{\nu_{q}\nu_{q}'}\mcV' }\biggr)  \\
&\leq \tfrac{C}{\nu_{q}}\log (\dim(V)+1) \round{\ophnorm{\mathcal{V} - \mathcal{V}'} + \abs{\nu_q'-\nu_q}\frac{\ophnorm{\mcV'}}{\nu_{q}'}}\\
&\leq  \tfrac{C}{\nu_{q}}\log(\dim(V)+1) \ophnorm{\mathcal{V} - \mathcal{V}'} \left(1 + \tfrac1{\nu_{q}'}\ophnorm{\mcV'}\right).
\end{align*}
}
Note that (a) holds because of \Cref{cor:bhatia}, and (b) follows from a triangle inequality. 
From \cref{lemma:rationbound} (with $\lambdamin=1$) we get 
\begin{align*}
(\forall f\neq 0) \quad \frac{\norm{\mcC^{-1} f}}{\norm{\mcC'^{-1} f}} \in [(1+\eps')^{-1},1+\eps'].
\end{align*}
\Cref{lem:equivalences} and the observation that $\dim(V) \leq \rk(\mathcal{V}) + \rk(\mathcal{V'})$ complete the proof.
\end{proof}



\mysection{Conclusion}

In this paper we analyzed the trade-offs of an approximation method for preconditioning used in fast algorithms for training kernel models. Our analysis provides sufficient conditions on the approximation properties of the approximated preconditioner. This analysis guides the design of practical preconditioners using in implementations such as EigenPro2 \cite{ma2019kernel} and EigenPro3 \cite{abedsoltan2023toward} based on preconditioned gradient descent.

\subsection*{Acknowledgements}
A.A. and M.B. are supported by the National Science Foundation (NSF) and the Simons Foundation for the Collaboration on the Theoretical Foundations of Deep Learning (\url{https://deepfoundations.ai/}) through awards DMS-2031883 and \#814639  and the TILOS institute (NSF CCF-2112665). P.P. was supported by a Simons Postdoctoral Fellowship via HDSI at UCSD.
L.R. is supported by the National Science Foundation under Grant CCF-2006994 and acknowledges support by HDSI, UCSD.

\bibliographystyle{apalike}
\bibliography{aux_files/ref}

\end{document}